\newtheorem{theorem}{Theorem}
\newtheorem{lemma}{Lemma}
\newtheorem{proposition}{Proposition}
\newcommand*{\eg}{e.g.\@\xspace}
\newcommand*{\ie}{i.e.\@\xspace}
\def\bb{\mathbf{b}}
\def\ee{\mathbf{e}}
\def\gg{\mathbf{g}}
\def\pp{\mathbf{p}}
\def\qq{\mathbf{q}}
\def\rr{\mathbf{r}}
\def\sss{\mathbf{s}}
\def\vv{\mathbf{v}}
\def\xx{\mathbf{x}}
\def\yy{\mathbf{y}}
\def\AA{\mathbf{A}}
\def\II{\mathbf{I}}
\def\JJ{\mathbf{J}}
\def\OO{\mathbf{O}}
\def\PP{\mathbf{P}}
\def\SSS{\mathbf{S}}
\def\UU{\mathbf{U}}
\def\VV{\mathbf{V}}
\def\ZZ{\mathbf{Z}}
\def\Be{\mathbb{B}}
\def\Ee{\mathbb{E}}
\def\Re{\mathbb{R}}
\def\Se{\mathbb{S}}
\DeclareMathOperator*{\argmin}{arg\,min}
\DeclareMathSymbol{@}{\mathord}{letters}{"3B}
\newcommand\mypara[1]{\vspace{0mm}\noindent\textbf{#1}}
\def\latex/{\LaTeX}
\def\bibtex/{\hologo{BibTeX}}
\newenvironment{sproof}{%
  \proof}{\endproof}
\title{Learning to Guide Random Search}
\author{Ozan Sener\\
Intel Labs \\
\And
Vladlen Koltun \\
Intel Labs \\
}
\begin{document}

\maketitle

\begin{abstract}

We are interested in derivative-free optimization of high-dimensional functions. The sample complexity of existing methods is high and depends on problem dimensionality, unlike the dimensionality-independent rates of first-order methods. The recent success of deep learning suggests that many datasets lie on low-dimensional manifolds that can be represented by deep nonlinear models. We therefore consider derivative-free optimization of a high-dimensional function that lies on a latent low-dimensional manifold. We develop an online learning approach that learns this manifold while performing the optimization. In other words, we jointly learn the manifold and optimize the function. Our analysis suggests that the presented method significantly reduces sample complexity. We empirically evaluate the method on continuous optimization benchmarks and high-dimensional continuous control problems. Our method achieves significantly lower sample complexity than Augmented Random Search, Bayesian optimization, covariance matrix adaptation (CMA-ES), and other derivative-free optimization algorithms.
\end{abstract}

\section{Introduction}
\label{sec:introduction}

A typical approach to machine learning problems is to define an objective function and optimize it over a dataset. First-order optimization methods are widely used for this purpose since they can scale to high-dimensional problems and their convergence rates are independent of problem dimensionality in most cases. However, gradients are not available in many important settings such as control, black-box optimization, and interactive learning with humans in the loop. Derivative-free optimization (DFO) can be used to tackle such problems. The challenge is that the sample complexity of DFO scales poorly with the problem dimensionality. The design of DFO methods that solve high-dimensional problems with low sample complexity is a major open problem.

The success of deep learning methods suggests that high-dimensional data that arises in real-world settings can commonly be represented in low-dimensional spaces via learned nonlinear features. In other words, while the problems of interest are high-dimensional, the data typically lies on low-dimensional manifolds. If we could perform the optimization directly in the manifold instead of the full space, intuition suggests that we could reduce the sample complexity of DFO methods since their convergence rates are generally a function of the problem dimensionality \citep{Nesterov2017, Dvurechensky2018}. In this paper, we focus on high-dimensional data distributions that are drawn from low-dimensional manifolds. Since the manifold is typically not known prior to the optimization, we pose the following question. \emph{Can we develop an adaptive derivative-free optimization algorithm that learns the manifold in an online fashion while performing the optimization?}

There exist DFO methods that aim to identify a low-dimensional search space \citep{Maheswaranathan2018, Choromanski2019}. However, they are limited to linear subspaces. In contrast, we propose to use expressive nonlinear models (specifically, neural networks) to represent the manifold. Our approach not only increases expressivity but also enables utilization of domain knowledge concerning the geometry of the problem. For example, if the function of interest is known to be translation invariant, convolutional networks can be used to represent the underlying manifold structure. On the other hand, the high expressive power and flexibility brings challenges. Our approach requires solving for the parameters of the nonlinear manifold at each iteration of the optimization. To address this, we develop an efficient online method that learns the underlying manifold while the function is being optimized.

We specifically consider random search methods and extend them to the nonlinear manifold learning setting. Random search methods choose a set of random directions and perform perturbations to the current iterate in these directions. Differences of the function values computed at perturbed points are used to compute an estimator for the gradient of the function. We first extend this to random search over a known manifold and show that sampling directions in the tangent space of the manifold provides a similar estimate. We then propose an online learning method that estimates this manifold while jointly performing the optimization. We theoretically analyze sample complexity and show that our method reduces it. We conduct extensive experiments on continuous control problems, continuous optimization benchmarks, and gradient-free optimization of an airfoil. The results indicate that our method significantly outperforms state-of-the-art derivative-free optimization algorithms from multiple research communities.

\section{Preliminaries}

We are interested in high-dimensional stochastic optimization problems of the form
\begin{equation}
\min_{\xx \in \Re^d} f(\xx) = \Ee_\xi [ F(\xx, \xi) ],
\end{equation}
where $\xx$ is the optimization variable and $f:\Re^d \rightarrow \Re$ is the function of interest, which is defined as expectation over a noise variable $\xi$. We assume that the stochastic function is bounded \mbox{($|F(\xx,\xi)| \leq \Omega$)}, $L$-Lipschitz, and $\mu$-smooth\footnote{$L$-Lipschitz and $\mu$-smooth: $|f(\xx) - f(\yy) | \leq L \|\xx-\yy\|_2 \text{ and } \|\nabla_{\xx}f(\xx) - \nabla_{\xx}f(\yy) \|_2 \leq \mu \|\xx-\yy\|_2 \quad \forall_{\xx,\yy}$} with respect to $\xx$ for all $\xi$, and has uniformly bounded variance \mbox{($\Ee_{\xi}[(F(\xx,\xi)- f(\xx))^2] \leq V_F$)}. In DFO, we have no access to the gradients. Instead, we only have zeroth-order access by evaluating the function $F$ (\ie sampling $F(\xx, \xi)$ for the input $\xx$).

We are specifically interested in random search methods in which an estimate of the gradient is computed using function evaluations at points randomly sampled around the current iterate. Before we formalize this, we introduce some definitions. Denote the $d$-dimensional unit sphere and unit ball by $\Se^{d-1}$ and $\Be^{d}$, respectively. We define a \emph{smoothed} function following \citet{Flaxman2005}. For a function $f: \Re^d \rightarrow \Re$, its $\delta$-smoothed version is \mbox{$\hat{f}(\xx) = \Ee_{\vv \sim\Be^d}[f(\xx + \delta \vv)]$}. The main workhorse of random search is the following result by \cite{Flaxman2005}. Let $\sss$ be a random vector sampled from the uniform distribution over $\Se^{d-1}$. Then $f(\xx + \delta \sss)\sss$ is an unbiased estimate of the gradient of the smoothed function:
\begin{equation}
 \Ee_{\xi, \sss \in \Se^{d-1}}[F(\xx+ \delta \sss, \xi) \sss] = \frac{\delta}{d} \nabla_{\xx} \hat{f}(\xx).
 \label{eq:unbiased_grad}
\end{equation}
We use antithetic samples since this is known to decrease variance \citep{kroese2013handbook} and define the final gradient estimator as $y(\xx,\sss) = \left( F(\xx + \delta \sss, \xi) - F(\xx - \delta \sss, \xi) \right) \sss$. Extending (\ref{eq:unbiased_grad}) to the antithetic case, $\Ee_{\xi, \sss \in \Se^{d-1}}[y(\xx, \sss)]= \frac{2  \delta}{d} \nabla_{\xx} \hat{f}(\xx)$.

A simple way to optimize the function of interest is to use the gradient estimate in stochastic gradient descent (SGD), as summarized in Algorithm~\ref{alg:ars}. This method has been analyzed in various forms and its convergence is characterized well for nonconvex smooth functions. We restate the convergence rate and defer the constants and proof to Appendix~\ref{sec:proof_of_basic_convergence}.

\begin{proposition}[\citealp{Flaxman2005, Vemula2019}]
Let $f(\xx)$ be differentiable, $L$-Lipschitz, and $\mu$-smooth. Consider running random search (Algorithm~\ref{alg:ars}) for $T$ steps. Let $k=1$ for simplicity. Then
\[
\frac{1}{T}\sum_{t=1}^T \Ee \|\nabla_{\xx}f(\xx^t)\|_2^2 \leq \mathcal{O}\left( T^{-\frac{1}{2}}d + T^{-\frac{1}{3}}d^{\frac{2}{3}} \right).
\]
\label{prop:rs}
\end{proposition}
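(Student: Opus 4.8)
The plan is to run the standard analysis of zeroth-order SGD for nonconvex smooth objectives: combine the smoothness (descent) inequality with a telescoping sum, and then trade off the smoothing radius $\delta$ and the step size $\eta$ against $d$ and $T$. Write the iteration as $\xx^{t+1} = \xx^t - \eta\, g^t$, where $g^t = \tfrac{d}{2\delta}\, y(\xx^t,\sss^t)$ is the antithetic estimator rescaled so that, by the antithetic form of (\ref{eq:unbiased_grad}), $\Ee[g^t \mid \xx^t] = \nabla_{\xx}\hat f(\xx^t)$; the constant $\tfrac{d}{2\delta}$ may instead be absorbed into $\eta$, which changes nothing. Throughout, condition on the natural filtration so that $\sss^t$ and the two evaluation noises at step $t$ are independent of $\xx^t$.

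First I would invoke $\mu$-smoothness, $f(\xx^{t+1}) \le f(\xx^t) + \langle \nabla f(\xx^t), \xx^{t+1}-\xx^t\rangle + \tfrac{\mu}{2}\|\xx^{t+1}-\xx^t\|_2^2$, and take the conditional expectation given $\xx^t$, using unbiasedness, to get
\[
\Ee[f(\xx^{t+1})\mid\xx^t] \le f(\xx^t) - \eta\,\langle \nabla f(\xx^t),\nabla\hat f(\xx^t)\rangle + \tfrac{\mu\eta^2}{2}\,\Ee\!\left[\|g^t\|_2^2 \,\middle|\, \xx^t\right].
\]
Two ingredients then control the right-hand side. First, a bias bound: since $\hat f(\xx)=\Ee_{\vv\sim\Be^d}[f(\xx+\delta\vv)]$ we have $\nabla\hat f(\xx)=\Ee_{\vv}[\nabla f(\xx+\delta\vv)]$, so $\mu$-smoothness gives $\|\nabla\hat f(\xx)-\nabla f(\xx)\|_2 \le \mu\delta$, whence $\langle\nabla f,\nabla\hat f\rangle \ge \|\nabla f\|_2^2 - \mu\delta\|\nabla f\|_2 \ge \tfrac12\|\nabla f\|_2^2 - \tfrac12\mu^2\delta^2$. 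Second, a second-moment bound: because $\|\sss^t\|_2=1$, $\|g^t\|_2^2 = \tfrac{d^2}{4\delta^2}\bigl(F(\xx^t+\delta\sss^t,\xi^+) - F(\xx^t-\delta\sss^t,\xi^-)\bigr)^2$, and, conditioning on $\sss^t$, splitting the difference into its two independent zero-mean noise parts (each of variance $\le V_F$) plus the deterministic part $f(\xx^t+\delta\sss^t)-f(\xx^t-\delta\sss^t)$ (of magnitude $\le 2L\delta$ by Lipschitzness) yields $\Ee[\|g^t\|_2^2\mid\xx^t] \lesssim \tfrac{d^2 V_F}{\delta^2} + d^2 L^2$.

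Substituting both ingredients, rearranging to isolate $\tfrac{\eta}{2}\|\nabla f(\xx^t)\|_2^2$, taking full expectations, and summing $t=1,\dots,T$ telescopes the $f$-terms; dividing by $\eta T/2$ and using $f(\xx^1)-\min_\xx f \le 2\Omega$ gives
\[
\frac1T\sum_{t=1}^T \Ee\|\nabla_{\xx} f(\xx^t)\|_2^2 \;\le\; \mathcal{O}\!\left(\frac{\Omega}{\eta T} + \mu^2\delta^2 + \frac{\mu\eta d^2 V_F}{\delta^2} + \mu\eta d^2 L^2\right).
\]
It remains to choose the free parameters. Optimizing over $\delta$ balances $\mu^2\delta^2$ against $\mu\eta d^2 V_F/\delta^2$, giving $\delta^2 \asymp d\sqrt{\eta}$ and collapsing those two terms to $\mathcal{O}(d\sqrt{\eta})$, so the bound becomes $\mathcal{O}\bigl(\Omega/(\eta T) + d\sqrt{\eta} + \eta d^2\bigr)$. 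Finally, taking $\eta \asymp \bigl(d\sqrt T + d^{2/3}T^{2/3}\bigr)^{-1}$ makes $\Omega/(\eta T) \asymp dT^{-1/2} + d^{2/3}T^{-1/3}$, while $\eta d^2 \le dT^{-1/2}$ and $d\sqrt\eta \le d^{2/3}T^{-1/3}$, which delivers the claimed rate.

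I expect the main obstacle to be the second-moment bound: obtaining the $d^2/\delta^2$ scaling hinges on the antithetic perturbations being evaluated with \emph{independent} noise (so that the cancellation occurs only in the deterministic part and the $V_F$ term cannot be avoided) and on the $\|\sss^t\|_2=1$ normalization of the sphere sampling; a careless split would either drop the $V_F$ contribution or introduce spurious dimension factors. The remaining work — the bias bound and the telescoping — is routine, but the parameter tuning must use the ``$\min$''-type choice of $\eta$ above, rather than balancing only one pair of terms, so that both the $T^{-1/2}d$ and the $T^{-1/3}d^{2/3}$ terms actually appear in the final bound.
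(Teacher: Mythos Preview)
The proposal is correct and follows essentially the same route as the paper: the same second-moment bound $\Ee\|g^t\|_2^2 \lesssim d^2L^2 + d^2V_F/\delta^2$, the same bias control $\|\nabla\hat f - \nabla f\|_2 \le \mu\delta$, and the same balancing of $\delta$ and the step size to produce the $dT^{-1/2}$ and $d^{2/3}T^{-1/3}$ terms. The only cosmetic difference is that the paper first applies its SGD lemma to the smoothed function $\hat f$ (with the step size already optimized) and then transfers the bound to $f$ via $\|\nabla f\|_2^2 \le 2\|\nabla\hat f\|_2^2 + 2\mu^2\delta^2$, whereas you handle the bias directly inside the descent step via the inner-product lower bound $\langle\nabla f,\nabla\hat f\rangle \ge \tfrac12\|\nabla f\|_2^2 - \tfrac12\mu^2\delta^2$; the resulting inequalities and parameter choices coincide.
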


\begin{table}[t]
\vspace{-6mm}
\resizebox{\columnwidth}{!}{%
\begin{tabular}{lr}
\begin{minipage}[t]{0.5\textwidth}
\vspace{-5mm}
\begin{algorithm}[H]
\caption{Random Search}
\label{alg:ars}
\begin{algorithmic}[1]
\For{$t=1$ {\bfseries to} $T$}
\State $\gg^t, \_ =\textsc{GradEst}(\xx^t, \delta)$
\State $\xx^{t+1} = \xx^{t} - \alpha \gg^t$
\EndFor
\end{algorithmic}
\end{algorithm}
\end{minipage} &
\begin{minipage}[t]{0.55\textwidth}
\vspace{-5mm}
\begin{algorithm}[H]
\caption{ Manifold Random Search}
\label{alg:mrs}
\begin{algorithmic}[1]
\For{$t=1$ {\bfseries to} $T$}
\State $\gg^t, \_ = \textsc{ManifoldGradEst}(\xx^t, \JJ(\xx^t; \theta^\star))$
\State  $\xx^{t+1} = \xx^{t} - \alpha \gg^t$
\EndFor
\end{algorithmic}
\end{algorithm}
\end{minipage}  \\
\begin{minipage}[t]{0.5\textwidth}
\noindent \rule{\textwidth}{\heavyrulewidth} 
\vspace{-5mm}
\begin{algorithmic}[1]
\Procedure{GradEst}{$\xx$, $\delta$}
\State \textbf{Sample:} $\sss_1, \ldots, \sss_k \sim \Se^{d - 1}$
\State \textbf{Query:} $y_i  = f(\xx + \delta \sss_i) -  f(\xx- \delta  \sss_i)$
\State \textbf{Estimator:} $\gg=\frac{d}{2\delta}\sum_{i=1}^k y_i \sss_i$
\State \textbf{return} $\gg, \{ \sss_i \}_{i \in [k]}$
\EndProcedure
\end{algorithmic}
 \vspace{-2mm}
\noindent \rule{\textwidth}{\lightrulewidth}
\end{minipage}  &
\begin{minipage}[t]{0.55\textwidth}
\noindent \rule{\textwidth}{\heavyrulewidth} 
\begin{algorithmic}[1]
\Procedure{ManifoldGradEst}{$\xx$, $\theta$,$\delta$}
\State \textbf{Normalize:} $\JJ_q$ = GramSchmidt($\JJ$) 
\State \textbf{Sample:} $\sss_1, \ldots, \sss_k \sim \Se^{n-1}$
\State \textbf{Query:} $y_i  = f(\xx + \delta \JJ_q \sss_i) -  f(\xx- \delta \JJ_q  \sss_i)$
\State \textbf{Estimator:} $\gg=\frac{n}{2\delta}\sum_{i=1}^k y_i \JJ_q \sss_i$
\State \textbf{return} $\gg, \{ \JJ_q\sss_i \}_{i \in [k]}$
\EndProcedure
\end{algorithmic}
\vspace{-2mm}
\noindent \rule{\textwidth}{\lightrulewidth}
\end{minipage}
\end{tabular}}
\vspace{-2mm}
\end{table}

\section{Online Learning to Guide Random Search}
Proposition~\ref{prop:rs} implies that the sample complexity of random search scales linearly with the dimensionality. This dependency is problematic when the function of interest is high-dimensional. We argue that in many practical problems, the function of interest lies on a low-dimensional nonlinear manifold. This structural assumption will allow us to significantly reduce the sample complexity of random search, without knowing the manifold a priori.

Assume that the function of interest is defined on an $n$-dimensional manifold \mbox{($n \ll d$)} and this manifold can be defined via a nonlinear parametric family (\eg a neural network). Formally, we are interested in derivative-free optimization of functions with the following properties:
\begin{itemize}[topsep=0pt,parsep=0pt,partopsep=0pt]
\item \textbf{Smoothness:} $F(\cdot, \xi): \Re^{d} \rightarrow \Re$ is $\mu$-smooth and $L$-Lipschitz for all $\xi$.
\item \textbf{Manifold:} $F(\cdot, \xi)$ is defined on an $n$-dimensional manifold $\mathcal{M}$ for all $\xi$.
\item \textbf{Representability:} The manifold $\mathcal{M}$ and the function of interest can be represented using parametrized function classes $r(\cdot;\theta)$ and $g(\cdot;\psi)$. Formally, given $\xi$, there exist $\theta^\star$ and $\psi^\star$ such that $F(\xx,\xi) = g(r(\xx;\theta^\star);\psi^\star) \quad \forall{\xx \in \Re^{d}}$.
\end{itemize}

We will first consider an idealized setting where the manifold is already known (\ie we know $\theta^\star$). Then we will extend the developed method to the practical setting where the manifold is not known in advance and must be estimated with no prior knowledge as the optimization progresses.

\subsection{Warm-up: Random Search over a Known Manifold}

If the manifold is known a priori, we can perform random search directly over the manifold instead of the full space. Consider the chain rule applied to $g(r(\xx; \theta);\psi)$ as \mbox{$\nabla_{\xx}f(\xx) =\JJ(\xx; \theta^\star) \nabla_{\rr} g(\rr;\psi) $}, where \mbox{$\JJ(\xx; \theta^\star)=\nicefrac{\partial r(\xx; \theta^\star)}{\partial \xx}$} and \mbox{$\rr = r(\xx, \theta^\star)$}. The gradient of the function of interest lies in the column space of the Jacobian of the parametric family. In light of this result, we can perform random search in the column space of the Jacobian, which has lower dimensionality than the full space.

For numerical stability, we will first orthonormalize the Jacobian using the Gram-Schmidt procedure, and perform the search in the column space of this orthonormal matrix since it spans the same space. We denote the orthonormalized version of $\JJ(\xx; \theta^\star)$ by $\JJ_q(\xx;\theta^\star)$.

In order to perform random search, we sample an $n$-dimensional vector uniformly (\mbox{$\tilde{\sss} \sim \Se^{n-1}$}) and lift it to the input space via $\JJ_q(\xx;\theta^\star)\tilde{\sss}$. As a consequence of the manifold Stokes' theorem, using the lifted vector as a random direction gives an unbiased estimate of the gradient of the smoothed function as
\begin{equation}
\Ee_{\xi, \sss \in \Se^{n-1}} [y(\xx, \JJ_q(\xx;\theta^\star)\tilde{\sss})] = \frac{2  \delta}{n} \nabla_{\xx} \tilde{f}_{\theta^\star}(\xx),
\end{equation}
where the smoothed function is defined as $\tilde{f}_{\theta^\star}(\xx) = \Ee_{\tilde{\vv} \sim \Be^n}[f(\xx + \delta \JJ_q(\xx;\theta^\star) \tilde{\vv})]$. We show this result as Lemma~\ref{lemma:manifold_stoke} in Appendix~\ref{sec:grad_est}. We use the resulting gradient estimate in SGD. The following proposition summarizes the sample complexity of this method. The constants and the proof are given in Appendix~\ref{sec:proof_of_basic_convergence}.

\begin{proposition}
Let $f(\xx)$ be differentiable, $L$-Lipschitz, and $\mu$-smooth. Consider running manifold random search (Algorithm~\ref{alg:mrs}) for $T$ steps. Let $k=1$ for simplicity. Then
\[
\frac{1}{T}\sum_{t=1}^T \Ee \|\nabla_{\xx}f(\xx^t)\|_2^2 \leq \mathcal{O}\left( T^{-\frac{1}{2}}n + T^{-\frac{1}{3}}n^{\frac{2}{3}} \right).
\]
\label{prop:manifold_rs}
\end{proposition}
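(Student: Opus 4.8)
The plan is to reduce Proposition~\ref{prop:manifold_rs} to the standard random-search analysis that underlies Proposition~\ref{prop:rs}, but carried out on the manifold parametrization so that the ambient dimension $d$ is replaced throughout by the intrinsic dimension $n$. First I would view the update $\xx^{t+1} = \xx^t - \alpha \gg^t$ with $\gg^t = \frac{n}{2\delta} y(\xx^t, \JJ_q(\xx^t;\theta^\star)\tilde\sss)$ as an (approximate) stochastic gradient step on the smoothed function $\tilde f_{\theta^\star}$. By the manifold Stokes' lemma quoted in the excerpt (Lemma~\ref{lemma:manifold_stoke}), $\gg^t$ is an unbiased estimator of $\nabla_\xx \tilde f_{\theta^\star}(\xx^t)$: the $n/(2\delta)$ scaling in \textsc{ManifoldGradEst} exactly cancels the $2\delta/n$ factor. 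So the only three ingredients I need are (i) a bound on the bias $\|\nabla_\xx \tilde f_{\theta^\star}(\xx) - \nabla_\xx f(\xx)\|_2$ in terms of $\delta$, $\mu$, and $n$; (ii) a bound on the second moment $\Ee\|\gg^t\|_2^2$ in terms of $n$, $\delta$, $L$, and the function bound $\Omega$ (or $V_F$); and (iii) the usual descent-lemma bookkeeping for $\mu$-smooth nonconvex SGD, followed by optimizing over $\alpha$ and $\delta$.

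For (i), I would expand $\nabla_\xx \tilde f_{\theta^\star}(\xx) = \Ee_{\tilde\vv\sim\Be^n}[\nabla_\xx f(\xx+\delta\JJ_q\tilde\vv)]$ up to the subtlety that $\JJ_q$ depends on $\xx$; since we only need the gradient of the smoothed function as it appears in the descent lemma, and $\JJ_q(\xx;\theta^\star)$ has orthonormal columns ($\|\JJ_q\tilde\vv\|_2 \le 1$), $\mu$-smoothness of $f$ gives $\|\nabla_\xx \tilde f_{\theta^\star}(\xx) - \nabla_\xx f(\xx)\|_2 \le \mu\delta$ up to the differentiation-through-$\JJ_q$ terms — this is essentially the content of Lemma~\ref{lemma:manifold_stoke} and its neighbors in Appendix~\ref{sec:grad_est}, which I would simply cite. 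For (ii), using $y(\xx,\ww) = (F(\xx+\delta\ww,\xi)-F(\xx-\delta\ww,\xi))\ww$ with $\|\ww\|_2=1$ and either $L$-Lipschitzness ($|F(\xx+\delta\ww)-F(\xx-\delta\ww)|\le 2L\delta$) or the bound $\Omega$, one gets $\Ee\|\gg^t\|_2^2 = \frac{n^2}{4\delta^2}\Ee\|y\|_2^2 \le \mathcal{O}(n^2 L^2)$ or $\mathcal{O}(n^2\Omega^2/\delta^2)$; the key point is that the $d^2$ that would appear in Proposition~\ref{prop:rs} is now $n^2$ because we sample $\tilde\sss\in\Se^{n-1}$ and scale by $n$.

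For (iii), apply the $\mu$-smoothness descent inequality to $f$ along the step, take expectations, substitute the bias bound from (i) and the second-moment bound from (ii), sum over $t=1,\dots,T$, telescope using $|f|\le\Omega$, divide by $T$, and finally choose $\alpha \asymp T^{-1/2}/n$ and $\delta \asymp T^{-1/6}$ (mirroring the choices that produce the $T^{-1/2}d + T^{-1/3}d^{2/3}$ rate in Proposition~\ref{prop:rs}) to land on $\mathcal{O}(T^{-1/2}n + T^{-1/3}n^{2/3})$. The main obstacle is the $\xx$-dependence of $\JJ_q(\xx;\theta^\star)$: the smoothed function $\tilde f_{\theta^\star}$ is smoothed over a moving ellipsoidal neighborhood, so the clean identity "$\nabla$ of the smoothing $=$ smoothing of $\nabla$" picks up extra terms involving $\partial\JJ_q/\partial\xx$, and one must argue these are either zero (e.g. if $\JJ_q$ is locally constant along the manifold directions) or controllable under the stated assumptions; this is exactly why Lemma~\ref{lemma:manifold_stoke} is invoked as a black box, and the honest version of this proof either inherits its hypotheses or absorbs the defect into the $\mu\delta$ bias term. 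Everything else is the routine nonconvex-SGD calculation with $d\mapsto n$.
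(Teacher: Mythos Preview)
Your proposal is correct and mirrors the paper's proof essentially step for step: the paper first invokes Lemma~\ref{lemma:manifold_stoke} for unbiasedness, bounds the second moment as $\Ee\|\gg_m^t\|_2^2 \le 2n^2L^2 + 2n^2V_F/\delta^2$ (using the variance $V_F$ rather than your $\Omega$, but to the same effect), bounds $\|\nabla_\xx f - \nabla_\xx \tilde f_{\UU^\star}\|_2 \le \mu\delta$ exactly as you do, plugs into the standard nonconvex SGD lemma, and then optimizes $\alpha$ and $\delta$. Your concern about the $\xx$-dependence of $\JJ_q$ is legitimate and the paper treats $\UU$ as fixed in its smoothing calculations (equations (\ref{eq:ftohat2}) and the proof of Lemma~\ref{lemma:manifold_stoke})---so you are, if anything, being more careful than the paper on that point.
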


\subsection{Joint Optimization and Manifold Learning}

When $n\ll d$, the reduction in the sample complexity of random search (summarized in Proposition~\ref{prop:manifold_rs}) is significant. However, the setting of Algorithm~\ref{alg:mrs} and Proposition~\ref{prop:manifold_rs} is impractical since the manifold is generally not known a priori. We thus propose to minimize the function and learn the manifold jointly. In other words, we start with an initial guess of the parameters and solve for them at each iteration using all function evaluations that have been performed so far.

Our major objective is to improve the sample efficiency of random search. Hence, minimizing the sample complexity with respect to manifold parameters is an intuitive way to approach the problem. We analyze the sample complexity of SGD using biased gradients in Appendix~\ref{sec:proof_sgd_with_bias} and show the following informal result. Consider running manifold random search with a sequence of manifold parameters $\theta^{1},\psi^{1},\ldots,\theta^{T},\psi^{T}$ for $T$ steps. Then the additional suboptimality caused by biased gradients, defined as \mbox{$\textsc{SubOptimality}=\frac{1}{T}\sum_{t=1}^T \Ee[\nabla_{\xx}f(\xx)]$}, is bounded as follows:
{\small
\begin{equation}
\textsc{SubOptimality}(\theta^t, \psi^t) \leq \textsc{SubOptimality}(\theta^\star, \psi^\star) +   \frac{\Omega}{T} \sum_{t=1}^T  \| \nabla_{\xx} \tilde{f}_{\theta^\star}(\xx^t) - \nabla_{\xx} g(r(\xx^t;\theta^t);\psi^t) \|_2,
\label{eq:true_loss}
\end{equation}}
where $\textsc{SubOptimality}(\theta^\star, \psi^\star)$ is the suboptimality of the oracle case (Algorithm~\ref{alg:mrs}). Our aim is to minimize the additional suboptimality with respect to $\theta^t$ and $\psi^t$. However, we do not have access to $\nabla_{\xx} f(\xx)$ since we are in a derivative-free setting. Hence we cannot directly minimize (\ref{eq:true_loss}).

At each iteration, we observe $y(\xx^t,\sss^t)$. Moreover, \mbox{$y(\xx^t,\sss^t) = 2\delta {\sss^t}^\intercal \nabla_{\xx}\tilde{F}(\xx^t,\xi) + \mathcal{O}(\delta^2)$}, due to the smoothness. Since we observe the projection of the gradient onto the chosen directions, we minimize the projection of (\ref{eq:true_loss}) onto these directions. Formally, we define our one-step loss as
\begin{equation}
\mathcal{L}(\xx^t, \sss^t, \theta^t, \psi^t) = \left(\frac{y(\xx^t,\sss^t)}{2\delta} - {\sss^t}^\intercal \nabla_{\xx}g(r(\xx^t; \theta^t);\psi^t)\right)^2.
\end{equation}

We use the Follow the Regularized Leader (FTRL) algorithm \citep{OLBook1, OLBook2} to minimize the aforementioned loss function and learn the manifold parameters:
\begin{equation}
\theta^{t+1},\psi^{t+1} = \argmin_{\theta,\psi} \sum_{i=1}^t \mathcal{L}(\xx^i, \sss^i, \theta, \psi)+ \lambda \mathcal{R}(\theta,\psi),
\label{eq:loss}
\end{equation}
where the regularizer $\mathcal{R}({\theta,\psi}) =\|\nabla_{\xx} g(r(\xx^t;\theta^t);\psi^t) -  \nabla_{\xx} g(r(\xx^t;\theta);\psi)\|_2$ is a temporal smoothness term that penalizes sudden changes in the gradient estimates.

Algorithm~\ref{alg:lmrs} summarizes our algorithm. We add exploration by sampling a mix of directions from the manifold and the full space. In each iteration, we sample directions and produce two gradient estimates $\gg_m, \gg_e$ using the samples from the tangent space and the full space, respectively. We mix them to obtain the final estimate $\gg = (1-\beta)\gg_m + \beta \gg_e$. We discuss the implementation details of the FTRL step in Section~\ref{sec:practical}. In our theoretical analysis, we assume that (\ref{eq:loss}) can be solved optimally. Although this is a strong assumption, experimental results suggest that neural networks can easily fit any training data \citep{ZhangICLR2017}. Our experiments also support this observation.

Theorem~\ref{thm:main} states our main result concerning the sample complexity of our method. As expected, the sample complexity includes both the input dimensionality $d$ and the manifold dimensionality $n$. On the other hand, the sample complexity only depends on $n\sqrt{d}$ rather than $d$. Thus our method significantly decreases sample complexity when $n \ll d$.

\begin{theorem}
Let $f(\xx)$ be bounded, $L$-Lipschitz, and $\mu$-smooth. Consider running learned manifold random search (Algorithm~\ref{alg:lmrs}) for $T$ steps. Let $k_e=1$ and $k_m=1$ for simplicity. Then
\[
\frac{1}{T}\sum_{i=1}^T \Ee \|\nabla_{\xx}f(\xx^t)\|_2^2 \leq \mathcal{O}\left( d^{\frac{1}{2}}T^{-1} + (d^{\frac{1}{2}} + n + n d^{\frac{1}{2}}) T^{-\frac{1}{2}} + (n^{\frac{2}{3}}+d^{\frac{1}{2}}n^{\frac{2}{3}})T^{-\frac{1}{3}} \right).
\]
\label{thm:main}
\end{theorem}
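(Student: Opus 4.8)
The plan is to combine three ingredients already set up in the excerpt: (i) the oracle-manifold convergence bound of Proposition~\ref{prop:manifold_rs}, which handles the $n$-dependent terms; (ii) the biased-SGD perturbation bound~\eqref{eq:true_loss}, which converts the error in the learned manifold into extra suboptimality; and (iii) an online-learning regret bound for the FTRL procedure~\eqref{eq:loss}, which controls the cumulative manifold-fitting error $\frac{1}{T}\sum_t \|\nabla_{\xx}\tilde f_{\theta^\star}(\xx^t) - \nabla_{\xx} g(r(\xx^t;\theta^t);\psi^t)\|_2$. First I would run the standard descent-lemma argument for SGD with biased gradient estimates against the $\mu$-smooth objective $f$, exactly as in the proof of Proposition~\ref{prop:manifold_rs}, but now carrying the bias term explicitly; telescoping over $t=1,\dots,T$ and dividing by $\alpha T$ gives $\frac{1}{T}\sum_t \Ee\|\nabla_{\xx} f(\xx^t)\|_2^2$ bounded by a term of the form (oracle rate) $+$ (bias contribution) $+$ (variance-of-exploration contribution). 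Choosing $\alpha \sim T^{-1/2}$ and $\delta$ appropriately (as in the earlier propositions) makes the oracle part contribute the $n\,T^{-1/2}$ and $n^{2/3}T^{-1/3}$ terms, and the leading $d^{1/2}T^{-1}$ term comes from the $\|\xx^1-\xx^\star\|^2/(\alpha T)$ initialization term together with the $\sqrt{d}$ scaling introduced by the mixed exploration directions.

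Second, I would bound the variance term. Because the final gradient estimate is the $\beta$-mixture $\gg = (1-\beta)\gg_m + \beta\gg_e$, its second moment splits into a manifold part scaling with $n$ and a full-space part scaling with $d$, the latter weighted by $\beta$. Taking $\beta \sim T^{-1/2}$ (or a comparably small constant schedule) lets the $d$-dependence enter only as $\beta d \sim d^{1/2}$ up to the step-size factor, which is precisely why the bound reads $n + d^{1/2} + n d^{1/2}$ rather than $n + d$. I would be careful here that the cross term vanishes in expectation (the two estimators are conditionally unbiased for the respective smoothed gradients) and that the $n d^{1/2}$ coupling term arises from the interaction of the manifold-bias magnitude with the $\sqrt{d}$ exploration scale.

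Third — and this is the step I expect to be the real obstacle — I need to show that the cumulative manifold error $\frac{1}{T}\sum_t \|\nabla_{\xx}\tilde f_{\theta^\star}(\xx^t) - \nabla_{\xx} g(r(\xx^t;\theta^t);\psi^t)\|_2$ decays like $T^{-1/2}$ (up to the relevant dimension factors). The idea is: the per-step loss $\mathcal{L}(\xx^t,\sss^t,\theta,\psi)$ is a squared projection of this gradient mismatch onto a random direction $\sss^t\in\Se^{n-1}$, so in expectation over $\sss^t$ it equals $\frac{1}{n}\|\cdot\|_2^2$ plus $\mathcal{O}(\delta^2)$-smoothness residuals from the identity $y(\xx^t,\sss^t)=2\delta\,{\sss^t}^\intercal\nabla_{\xx}\tilde F(\xx^t,\xi)+\mathcal{O}(\delta^2)$. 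Under the realizability assumption, the comparator $(\theta^\star,\psi^\star)$ achieves zero loss (modulo the $\delta^2$ and the $\tilde f_{\theta^\star}$-versus-$F$ discrepancy), so a standard FTRL regret bound for the sequence of (bounded, strongly-convex-after-regularization) losses gives $\sum_t \mathcal{L}(\xx^t,\sss^t,\theta^t,\psi^t) \le \mathcal{O}(\text{regret})$, and then a martingale/Jensen argument converts the cumulative expected squared error into the cumulative $\ell_2$ error with an extra $\sqrt{n}$ and $\sqrt{T}$; combined with the $\Omega/T$ prefactor in~\eqref{eq:true_loss} this yields a $\sqrt{n}\,T^{-1/2}$-type contribution, i.e. part of the $n\,T^{-1/2}$ and $n d^{1/2}T^{-1/2}$ terms after multiplying by the $\sqrt d$ coupling. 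The delicate points are (a) justifying the FTRL regret bound when the loss is a random projection rather than the full quadratic — one has to pass through expectations carefully and handle the dependence of $\xx^t$ on past randomness — and (b) controlling the $\delta$-dependent residuals so that the choice $\delta \sim T^{-c}$ used elsewhere does not blow up any term; I would set $\delta$ small enough that all $\mathcal{O}(\delta)$ and $\mathcal{O}(\delta^2)$ contributions are dominated by the stated rate. Finally I would assemble the three bounds, optimize $\alpha$, $\delta$, $\beta$ in $T$, and collect the dominant powers to match the claimed $d^{1/2}T^{-1} + (d^{1/2}+n+nd^{1/2})T^{-1/2} + (n^{2/3}+d^{1/2}n^{2/3})T^{-1/3}$.
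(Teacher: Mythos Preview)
Your high-level three-part decomposition matches the paper's, but several of the concrete mechanisms you propose are not the ones that actually make the bound close, and at least two of them would not work as stated.

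First, the origin of the $\sqrt{d}$ factors is different from what you describe. The per-step loss $\mathcal{L}$ is the projection of the gradient mismatch onto an \emph{exploration} direction $\sss^t\in\Se^{d-1}$, not $\Se^{n-1}$; hence $\Ee_{\sss}[\mathcal{L}^t]=\tfrac{1}{d}\|\cdot\|_2^2$, and when you invert this relation in the biased-SGD bound you pick up a $\sqrt{d}$ multiplier in front of $\sqrt{\tfrac{1}{T}\sum_t\Ee[\Delta^t]}$. That is where every $d^{1/2}$ in the final statement comes from. Correspondingly, the mixing weight is $\beta=\nicefrac{1}{d}$, not $\beta\sim T^{-1/2}$: with $\beta=\nicefrac{1}{d}$ the variance of the combined estimator is $\mathcal{O}(n^2L^2+n^2V_F/\delta^2)$, i.e.\ the $d$-dependence in the variance cancels entirely. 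Your ``$\beta d\sim d^{1/2}$'' heuristic does not produce the stated rate.

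Second, the control of the empirical learning loss is not a standard FTRL regret bound against a strongly convex regularizer. The paper applies the FTL--BTL lemma and then bounds the one-step stability $\mathcal{L}(\theta^t)-\mathcal{L}(\theta^{t+1})$ by the \emph{learner path length} $\|\nabla_{\xx}g(r(\xx^t;\theta^t);\psi^t)-\nabla_{\xx}g(r(\xx^t;\theta^{t+1});\psi^{t+1})\|_2$, which (via realizability and $\mu$-smoothness) is in turn bounded by the \emph{optimizer} path length $\sum_t\|\xx^t-\xx^{t-1}\|_2\le \alpha\sum_t\|\gg^t\|_2$. This feedback---the same step size $\alpha$ controls both the SGD progress and the cumulative learning error---is the mechanism that lets you optimize $\alpha$ and $\delta$ jointly and obtain the mixed $nd^{1/2}$ terms. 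A generic $\mathcal{O}(\sqrt{T})$ regret argument does not give this coupling and would leave you with worse dependence.

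Third, passing from the empirical $\sum_t\Delta^t$ to the expected $\sum_t\Ee[\Delta^t]$ is done via Freedman's martingale inequality (variance-adaptive Bernstein), not a plain Jensen step; this is what produces the additive $d^{1/2}T^{-1}$ and $d^{1/2}T^{-1/2}$ terms with the correct constants, and it requires bounding the conditional variance of $\Delta^t$ by a multiple of its mean, then solving a quadratic in $\sqrt{\sum_t\Ee[\Delta^t]}$.
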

\begin{sproof}
We provide a short proof sketch here and defer the detailed proof and constants to Appendix~\ref{sec:proof}. We start by analyzing SGD with bias. The additional suboptimality of using $\theta^t,\psi^t$ instead of $\theta^\star,\psi^\star$ can be bounded by (\ref{eq:true_loss}).

The empirical loss we minimize is the projection of (\ref{eq:true_loss}) onto randomly chosen directions. Next, we show that the expectation of the empirical loss is (\ref{eq:true_loss}) when the directions are chosen uniformly at random from the unit sphere:
\begin{equation}
\Ee_{\sss^t \in \Se^{d-1}} \left[\mathcal{L}(\xx^t,\sss^t,\theta^t,\psi^t)\right] = \frac{1}{dT} \sum_{t=1}^T  \| \nabla_{\xx} \tilde{f}_{\theta^\star}(\xx^t) - \nabla_{\xx} g(r(\xx^t;\theta^t);\psi^t) \|_2.
\label{eq:exp_sketch}
\end{equation}
A crucial argument in our analysis is the concentration of the empirical loss around its expectation. In order to study this concentration, we use Freedman's inequality \citep{Freedman1975}, inspired by the analysis of generalization in online learning by \citet{Kakade2009}. Our analysis bounds the difference $\left|\Ee_{\sss^t \in \Se^{d-1}} \left[\sum_{t=1}^T \mathcal{L}^t \right] - \sum_{t=1}^T \mathcal{L}^t\right|$, where $\mathcal{L}^t = \mathcal{L}(\xx^t,\sss^t,\theta^t,\psi^t)$.

Next, we use the FTL-BTL Lemma \citep{Kalai05} to analyze the empirical loss $\sum_{t=1}^T \mathcal{L}^t$. We bound the empirical loss in terms of the distances between the iterates \mbox{$\sum_{t=1}^T \|\xx^{t+1}-\xx^t\|_2$}. Such a bound would not be useful in an adversarial setting since the adversary chooses $\xx^t$, but we set appropriate step sizes, which yield sufficiently small steps and facilitate convergence.

Our analysis of learning requires the directions in (\ref{eq:exp_sketch}) to be sampled from a unit sphere. On the other hand, our optimization method requires directions to be chosen from the tangent space of the manifold. We mix exploration (directions sampled from $\Se^{d-1}$) and exploitation (directions sampled from the tangent space of the manifold) to address this mismatch. We show that mixing weight $\beta=\nicefrac{1}{d}$ yields both fast optimization and no-regret learning. Finally, we combine the analyses of empirical loss, concentration, and SGD to obtain the statement of the theorem.
\end{sproof}

\section{Implementation Details and Limitations}
\label{sec:practical}

We summarize important details here and elaborate further in Appendix~\ref{sec:additional_implementation_details}. A full implementation is available at \url{https://github.com/intel-isl/LMRS}.

\noindent \textbf{Parametric family.} We use multilayer perceptrons with ReLU nonlinearities to define $g$ and $r$. We initialize our models with standard normal distributions. Our method thus starts with random search at initialization and transitions to manifold random search as the learning progresses.

\noindent \textbf{Solving FTRL.} Results on training deep networks suggest that local SGD-based methods perform well. We thus use SGD with momentum as a solver for FTRL in (\ref{eq:loss}). We do not solve each learning problem from scratch but initialize with the previous solution. Since this process may be vulnerable to local optima, we fully solve (\ref{eq:loss}) from scratch for every $100^{\text{th}}$ iteration of the method.

\noindent \textbf{Computational complexity.} Our method increases the amount of computation since we need to learn a model while performing the optimization. However, in DFO, the major computational bottleneck is typically the function evaluation. When efficiently implemented on a GPU, the time spent on learning the manifold is negligible in comparison to function evaluations.

\noindent \textbf{Parallelization.} Random search is highly parallelizable since directions can be processed independently. Communication costs include i) sending the current iterate to workers, ii) sending directions to each corresponding worker, and iii) workers sending the function values back. When the directions are chosen independently, they can be indicated to each worker via a single integer by first creating a shared noise table in preprocessing.  For a $d$-dimensional problem with $k$ random directions, these costs are %
\begin{tabular}{@{}l@{\hspace{5mm}}r@{}}
\begin{minipage}{0.30\textwidth}
$d$, $k$, and $k$, respectively. The total communication cost is therefore $d + 2 k$. In our method, each worker also needs a copy of the Jacobian, resulting in a communication cost of $d + 2k + kd$. Hence our method increases communication cost from $d + 2 k$ to $d + 2k + kd$.
\vspace{-30mm}
\end{minipage}&
\resizebox{0.67\columnwidth}{!}{%
\begin{minipage}[t]{0.72\textwidth}
\vspace{-5mm}
\begin{algorithm}[H]
\caption{Learned Manifold Random Search (LMRS)}
\label{alg:lmrs}
\begin{algorithmic}[1]
\For{$t=1$ {\bfseries to} $T$}
\State $\gg_e^t, \SSS^t_e = \textsc{GradEst}(\xx^t, \delta)$
\State $\gg_m^t, \SSS^t_m = \textsc{ManifoldGradEst}(\xx^t, \JJ(\xx^t; \theta^t))$
\State $\gg^t = \frac{\beta  k_e}{k_e + k_m}\gg_e^t + \frac{(1-\beta)  k_m}{k_e + k_m}\gg_m^t$
\State  $\xx^{t+1} = \xx^{t} - \alpha \gg^t$
\State $\theta^{t+1},\psi^{t+1} = \argmin_{\theta, \psi} \sum_{i=1}^t \mathcal{L}(\xx^i, \SSS_{e,m}^i, \theta, \psi) + \lambda \mathcal{R}(\theta,\psi)$
\EndFor
\end{algorithmic}
\end{algorithm}
\vspace{-4mm}
{\small See Algorithms~\ref{alg:ars} \& \ref{alg:mrs} for definitions of $\textsc{GradEst}$ and $\textsc{ManifoldGradEst}$.}
\end{minipage}}
\end{tabular}

\section{Experiments}
\label{sec:experiments}

We empirically evaluate the presented method (referred to as Learned Manifold Random Search (LMRS)) on the following sets of problems. \mbox{i) We} use the MuJoCo simulator \citep{mujoco} to evaluate our method on high-dimensional control problems. ii) We use 46 single-objective unconstrained functions from the Pagmo suite of continuous optimization benchmarks \citep{pagmo}. iii) We use the XFoil simulator \citep{xfoil} to benchmark gradient-free optimization of an airfoil.

We consider the following baselines. i) Augmented Random Search (ARS): Random search with all the augmentations from \citet{Mania2018}. ii) Guided ES \citep{Maheswaranathan2018}: A method to guide random search by adapting the covariance matrix. iii) CMA-ES \citep{HansenTutorial}: Adaptive derivative-free optimization based on evolutionary search. iv) REMBO \citep{Wang2016}: A Bayesian optimization method which uses random embeddings in order to scale to high-dimensional problems. Although CMA-ES and REMBO are not based on random search, we include them for the sake of completeness. Additional implementation details are provided in Appendix~\ref{sec:additional_implementation_details}.

\subsection{Learning Continuous Control}
Following the setup of \citet{Mania2018}, we use random search to learn control of highly articulated systems. The MuJoCo locomotion suite \citep{mujoco} includes six problems of varying difficulty. We evaluate our method and the baselines on all of them. We use linear policies and include all the tricks (whitening the observation space and scaling the step size using the variance of the rewards) from \citet{Mania2018}.  We report average reward over five random experiments versus the number of episodes (\ie number of function evaluations) in Figure~\ref{fig:mujoco_exp}. We also report the average number of episodes required to reach the prescribed reward threshold at which the task is considered `solved' in Table~\ref{tab:mujoco}. We include proximal policy optimization (PPO) \citep{ppo,stable_baselines} for reference. Note that our results are slightly different from the numbers reported by \citet{Mania2018} as we use 5 random seeds instead of 3.

\begin{figure}[t]
\vspace{-7mm}
\centering
\includegraphics[width=\columnwidth]{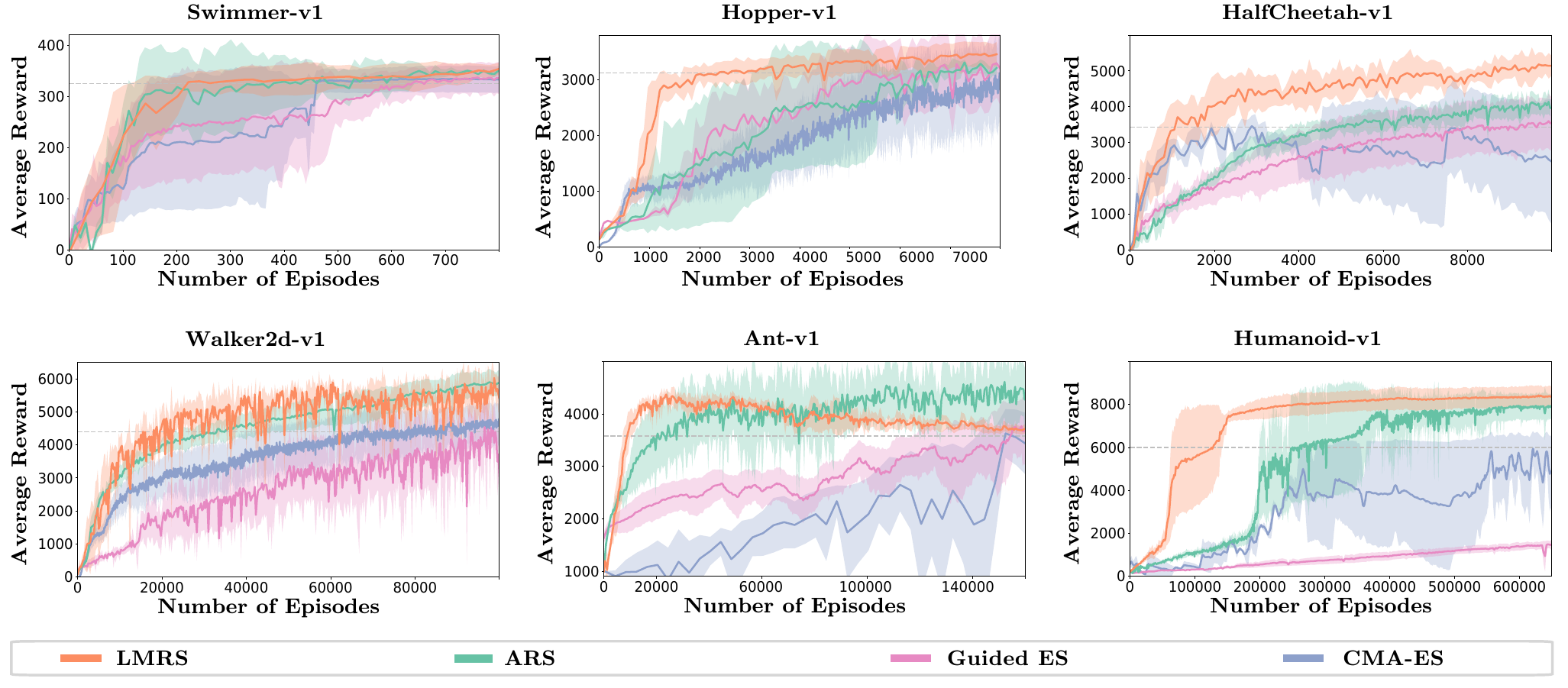}
\vspace{-6mm}
\caption{Average reward vs.\ number of episodes for MuJoCo locomotion tasks. In each condition, we perform 5 runs with different random seeds. Shaded areas represent 1 standard deviation. The grey horizontal line indicates the prescribed threshold at which the task is considered `solved'.}
\label{fig:mujoco_exp}
\vspace{-3mm}
\end{figure}

\begin{table}[b]
\vspace{-3mm}
\resizebox{\columnwidth}{!}{%
\begin{tabular}{@{}l@{\hskip 1mm}c@{\hskip 4mm}r@{\hskip 2mm}r@{\hskip 2mm}r@{\hskip 2mm}r@{\hskip 2mm}r@{\hskip 2mm}r@{\hskip 2mm}r@{\hskip 2mm}r@{}}
\toprule
&  &  &  &  &   & &  & \multicolumn{2}{c}{Self Baselines} \\
\cmidrule(lr){9-10}
Task & Threshold & LMRS & ARS & CMA-ES & Guided ES  & PPO & REMBO & No learning & Offline l. \\
\midrule
Swimmer & $325$ & $\mathbf{222}$ & $381$ & $460$ & $640$ & $790$ &  520 & $320$ & $325$ \\
Hopper & $3120$ & $\mathbf{2408}$ & $6108$ & $14640$ & $5288$ & $10397$ & (0/5) & 5561 $(4/5)$ &11616 \\
HalfCheetah & $3430$ & $\mathbf{1128}$ & $4284$ & $2888$ & $8004$ & $6820$ & (0/5) &6528 &3416\\
Walker & $4390$ & $\mathbf{15525}$ & $31240$ & $96525$ & $62544$ & $102440$ &  (0/5) & 295493 $(3/5)$ & $118640$\\
Ant & $3580$ & $\mathbf{9240}$ & $20440$ & $154440$ & $152400$ & $52553$ &  (0/5)  &24520$(4/5)$ & $63720 (2/5)$\\
Humanoid & $6000$ & $\mathbf{108133}$ & $220733$ & $1923030$ & (0/5) &  $200430$ &  (0/5) & $267260 (2/5) $ & $451260(2/5)$\\
\bottomrule
\end{tabular}}
\vspace{-3mm}
\caption{Number of episodes required to reach the prescribed threshold on each MuJoCo locomotion task for our method, baselines, and ablations. Lower is better. We average over five random seeds. We denote the number of successful trials as $($success$/$trial$)$ and average over successful trials only. If the number of successful trials is not noted, the method solved the task for all random seeds.}
\label{tab:mujoco}
\end{table}

The results suggest that our method improves upon ARS in all environments. Our method also outperforms all other baselines. The improvement is particularly significant for high-dimensional problems such as Humanoid. Our method is at least twice as efficient as ARS in all environments except Swimmer, which is the only low-dimensional problem in the suite. Interestingly, Guided-ES fails to solve the Humanoid task, which we think is due to biased gradient estimation. Furthermore, CMA-ES performs similarly to ARS. These results suggest that a challenging task like Humanoid is out of reach for heuristics like local adaptation of the covariance matrix due to high stochasticity and nonconvexity.

REMBO only solves the Swimmer task and fails to solve others. We believe this is due to the fact that these continuous control problems have no global structure and are highly nonsmooth. The number of possible sets of contacts with the environment is combinatorial in the number of joints, and each contact configuration yields a distinct reward surface. This contradicts the global structure assumption in Bayesian optimization.

\begin{figure}[t]
\vspace{-7mm}
\centering
\includegraphics[width=\columnwidth]{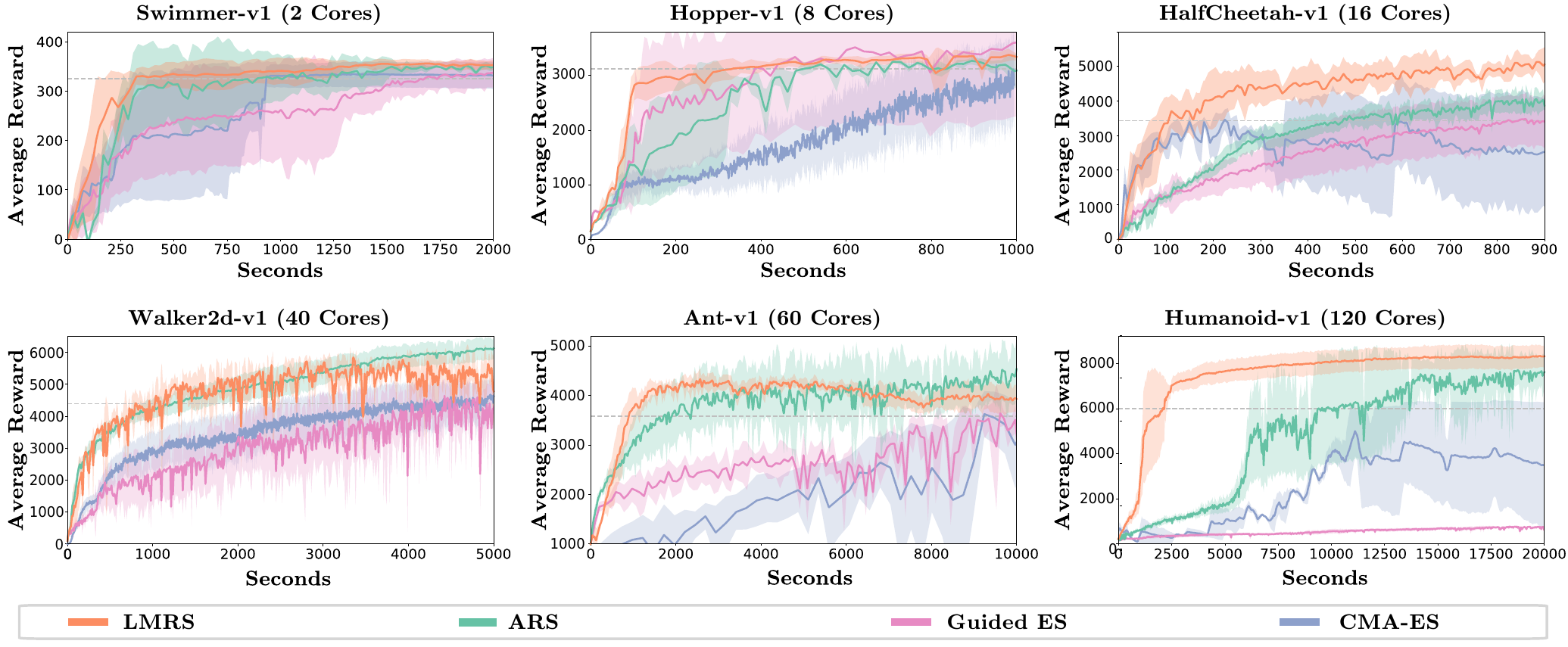}
\vspace{-6mm}
\caption{Average reward vs.\ wall-clock time for MuJoCo locomotion tasks. In each condition, we perform 5 runs with different random seeds. Shaded areas represent 1 standard deviation. The grey horizontal line indicates the prescribed threshold at which the task is considered `solved'. Measurements are performed on Intel Xeon E7-8890 v3 processors and Nvidia GeForce RTX 2080 Ti GPUs. We list the number of cores used for each experiment.}
\label{fig:mujoco_exp_wc}
\vspace{-3mm}
\end{figure}

\mypara{Wall-clock time analysis.}
Our method performs additional computation as we learn the underlying manifold. In order to quantify the effect of the additional computation, we perform wall-clock time analysis and plot average reward vs wall-clock time in Figure~\ref{fig:mujoco_exp_wc}. Our method outperforms all baselines with similar margins to Figure~\ref{fig:mujoco_exp}. The trends and shapes of the curves in Figures~\ref{fig:mujoco_exp} and~\ref{fig:mujoco_exp_wc} are similar. This is not surprising since computation requirements of all the optimizers are rather negligible when compared with the simulation time of MuJoCo.

The only major differences we notice are on the Hopper task. Here the margin between our method and the baselines narrows and the relative ordering of Guided-ES and ARS changes. This is due to the fact that simulation stops when the agent falls. In other words, the simulation time depends on the current solution. Methods that query the simulator for these unstable solutions lose less wall-clock time.

\begin{wrapfigure}{R}{0.5\columnwidth}
\centering
\vspace{-5mm}
\includegraphics[width=0.5\columnwidth]{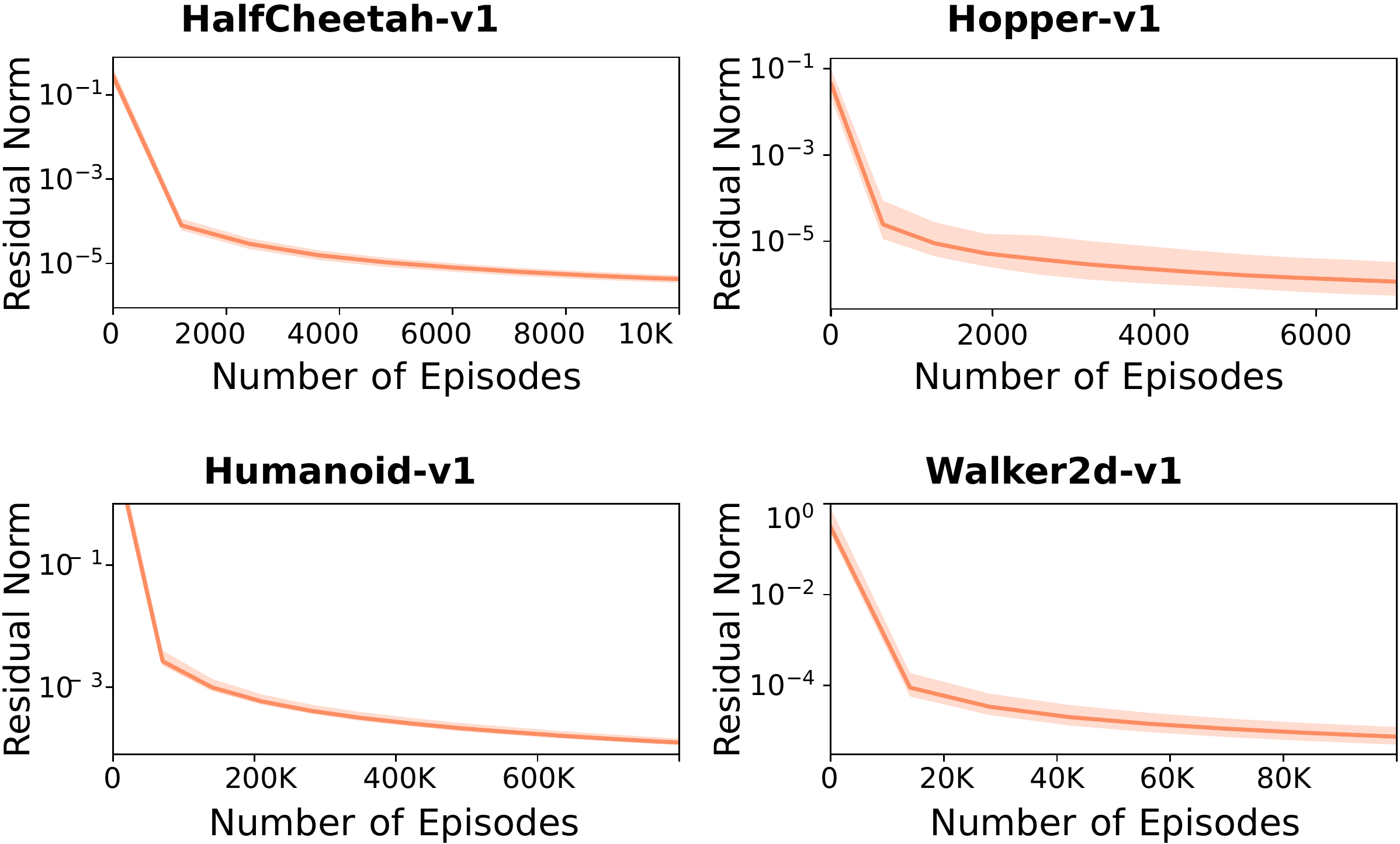}
\caption{Manifold learning accuracy. We plot $\frac{1}{T}\sum_{t=1}^T \|\nabla_{\xx}f(\xx^t) - \PP_{\JJ_q(\xx^t,\theta^t)}(\nabla_{\xx}f(\xx^t))\|_2$. In order to estimate the gradient, we use $\textsc{GradEst}$ with a high number of directions.}
\label{fig:regret_fig}
\vspace{-5mm}
\end{wrapfigure}
\mypara{Quantifying manifold learning performance.}
In order to evaluate the learning performance, we project the gradient of the function to the tangent space of the learned manifold and plot the norm of the residual. Since we do not have access to the gradients, we estimate them at $30$ time instants, evenly distributed through the learning process. We perform accurate gradient estimation using a very large number of directions ($2000$). We compute the norm of the residual of the projection as $\frac{1}{T}\sum_{t=1}^T \|\nabla_{\xx}f(\xx^t) - \PP_{\JJ_q(\xx^t,\theta^t)}(\nabla_{\xx}f(\xx^t))\|_2$, where $\PP_{\AA}(\cdot)$ is projection onto the column space of $\AA$. The results are visualized in Figure~\ref{fig:regret_fig}. Our method successfully and quickly learns the manifold in all cases.

\mypara{Ablation studies.}
Our method uses three major ideas. i) We learn a manifold that the function lies on. ii) We learn this manifold in an online fashion. iii) We perform random search on the learned manifold. To study the impact of each of these ideas, we perform the following experiments. i) \textbf{No learning.} We randomly initialize the manifold $r(\cdot; \theta)$ by sampling the entries of $\theta$ from the standard normal distribution. Then we perform random search on this random manifold. ii) \textbf{No online learning.} We collect an offline training dataset by sampling $\xx_i$ values uniformly at random from a range that includes the optimal solutions. We evaluate function values at sampled points and learn the manifold. We perform random search on this manifold without updating the manifold model. iii) \textbf{No search.} We use the gradients of the estimated function ($\nabla_{\xx}g(r(\xx;\theta^t)\psi^t)$) as surrogate gradients and minimize the function of interest using first-order methods. 

We list the results in Table~\ref{tab:mujoco}. We do not include the no-search baseline since it fails to solve any of the tasks. Failure of the no-search baseline suggests that the estimated functions are powerful enough to guide the search, but not accurate enough for optimization. The no-learning baseline outperforms ARS on the simplest problem (Swimmer), but either fails completely or increases sample complexity on other problems, suggesting that random features are not effective, especially on high-dimensional problems. Although the offline learning baseline solves more tasks than the no-learning one, it has worse sample complexity since initial offline sampling is expensive. This study indicates that all three of the ideas that underpin our method are important.

\subsection{Continuous Optimization Benchmarks}
We use continuous optimization problems from the Pagmo problem suite \citep{pagmo}. This benchmark includes minimization of 46 functions such as \emph{Rastrigin, Rosenbrock, Schwefel,} etc. (See Appendix~\ref{sec:additional_implementation_details} for the complete list.) We use ten random starting points and report the average number of function evaluations required to reach a stationary point. Figure~\ref{fig:classical_benchmark} reports the results as performance profiles \citep{dolan_more}. Performance profiles represent how frequently a method is within distance $\tau$ of optimality. Specifically, if we denote the number of function evaluations that method $m$ requires to solve problem $p$ by $T_m(p)$ and the number of function evaluations used by the best method by $T^\star(p)=\min_m T_m(p)$, the performance profile is the fraction of problems for which method $m$ is within distance $\tau$ of the best: $\nicefrac{1}{N_p}\sum_p \mathds{1}[T_m(p) - T^\star(p) \leq \tau]$, where $\mathds{1}[\cdot]$ is the indicator function and $N_p$ %
\begin{wrapfigure}{r}{0pt}
\centering
\vspace{-10mm}
\includegraphics[width=0.5\columnwidth]{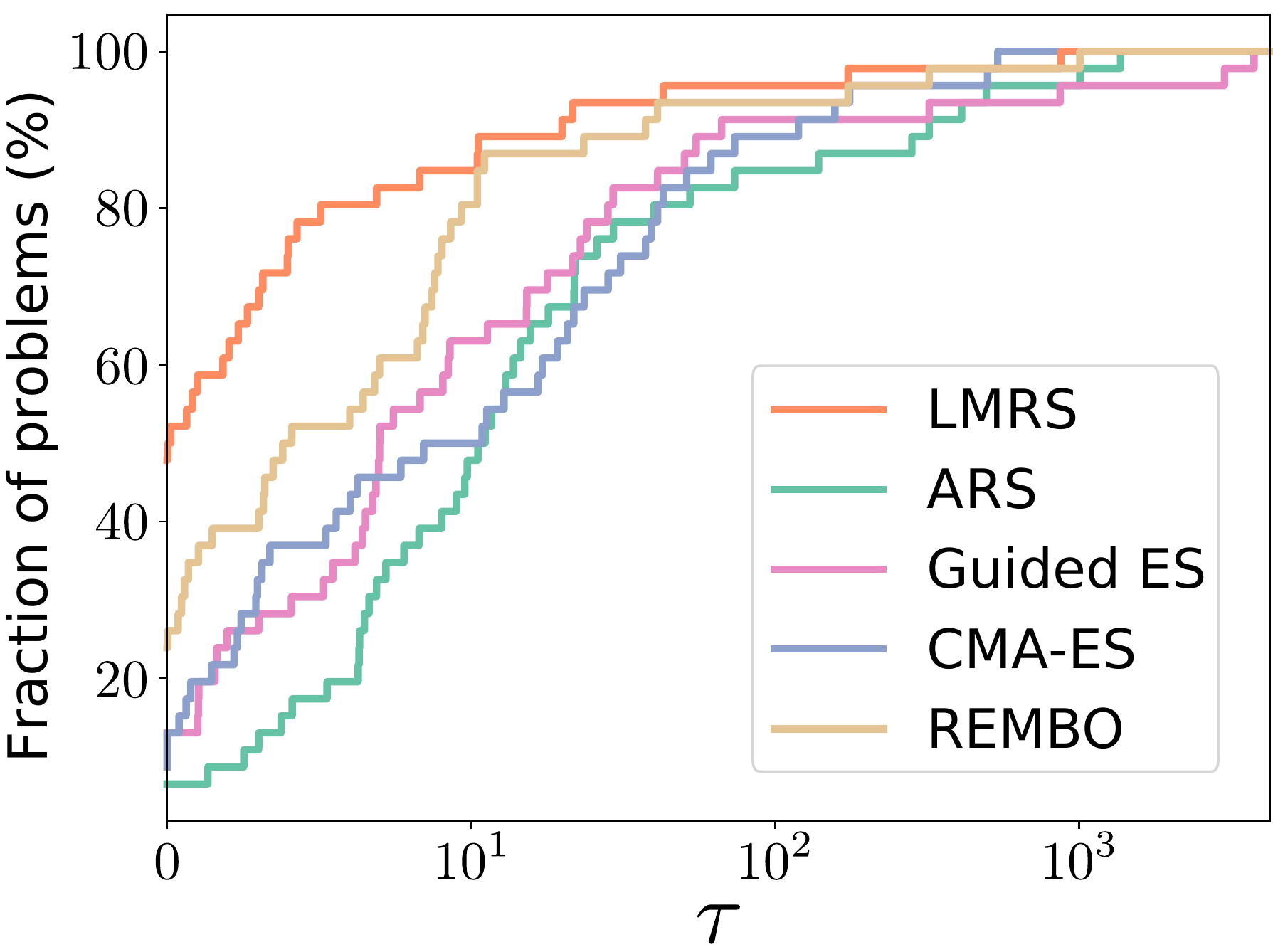}
\caption{Performance profiles of our method and baselines on an optimization benchmark.}
\label{fig:classical_benchmark}
\vspace{-12mm}
\end{wrapfigure}
is the number of problems.

As can be seen in Figure~\ref{fig:classical_benchmark}, our method outperforms all baselines. The success of our method is not surprising since the functions are typically defined as nonconvex functions of some statistics, inducing manifold structure by construction. REMBO (Bayesian optimization) is close to our method and outperforms the other baselines. We believe this is due to the global geometric structure of the considered functions. Both CMA-ES and Guided-ES outperform ARS.

\subsection{Optimization of an Airfoil}

We apply our method to gradient-free optimization of a 2D airfoil. We use a computational fluid dynamics (CFD) simulator, XFoil \citep{xfoil}, which can simulate an airfoil using its contour plot. We parametrize the airfoils using smooth polynomials of up to $36$ degrees. We model the upper and lower parts of the airfoil with different polynomials. The dimensionality of the problem is thus $72$. XFoil can simulate various viscosity properties, speeds, and angles of attack. The details are discussed in Appendix~\ref{sec:additional_implementation_details}. We plot the resulting airfoil after 1500 simulator calls in Table~\ref{tab:airfoil}. We also report the lift and drag of the resulting shape. The objective we optimize is $\textsc{Lift}-\textsc{Drag}$. Table~\ref{tab:airfoil} suggests that all methods find airfoils that can fly ($\textsc{Lift} > \textsc{Drag}$). Our method yields the highest $\textsc{Lift}-\textsc{Drag}$. Bayesian optimization outperforms the other baselines.

\begin{table}[t]
\vspace{-5mm}
\centering
\resizebox{\columnwidth}{!}{%
\begin{tabular}{@{}l@{}m{2cm}@{\hspace{4mm}}m{2cm}@{\hspace{1mm}}m{2cm}@{\hspace{1mm}}m{2cm}@{\hspace{1mm}}m{2cm}@{\hspace{1mm}}m{2cm}@{\hspace{1mm}}}
\toprule
      &  & \multicolumn{4}{c}{Airfoil after 1500 Simulations} \\
\cmidrule(lr){3-7}
      & \multicolumn{1}{c}{Init} &  \multicolumn{1}{c}{LMRS}  & \multicolumn{1}{c}{ARS} &  \multicolumn{1}{c}{Guided ES} &  \multicolumn{1}{c}{CMA-ES} &  \multicolumn{1}{c}{REMBO} \\
      \midrule
\textsc{Lift}   &   \multicolumn{1}{c}{$-0.229$}  & \multicolumn{1}{c}{$\mathbf{2.0126}$} & \multicolumn{1}{c}{$0.567$} &\multicolumn{1}{c}{$1.231$} & \multicolumn{1}{c}{$1.418$} & \multicolumn{1}{c}{$1.8645$} \\
\textsc{Drag} &  \multicolumn{1}{c}{$\quad0.040$} & \multicolumn{1}{c}{$0.0389$}  & \multicolumn{1}{c}{$\mathbf{0.007}$} & \multicolumn{1}{c}{$0.0598$} & \multicolumn{1}{c}{$0.0249$} &\multicolumn{1}{c}{$0.02123$} \\
$\textsc{Lift}-\textsc{Drag}$ &  \multicolumn{1}{c}{$-0.269$} & \multicolumn{1}{c}{$\mathbf{1.9737}$}  & \multicolumn{1}{c}{$0.560$} & \multicolumn{1}{c}{$1.1712$} & \multicolumn{1}{c}{$1.3931$} &\multicolumn{1}{c}{$1.8432$} \\
Foil   &\includegraphics [width=0.15\columnwidth]{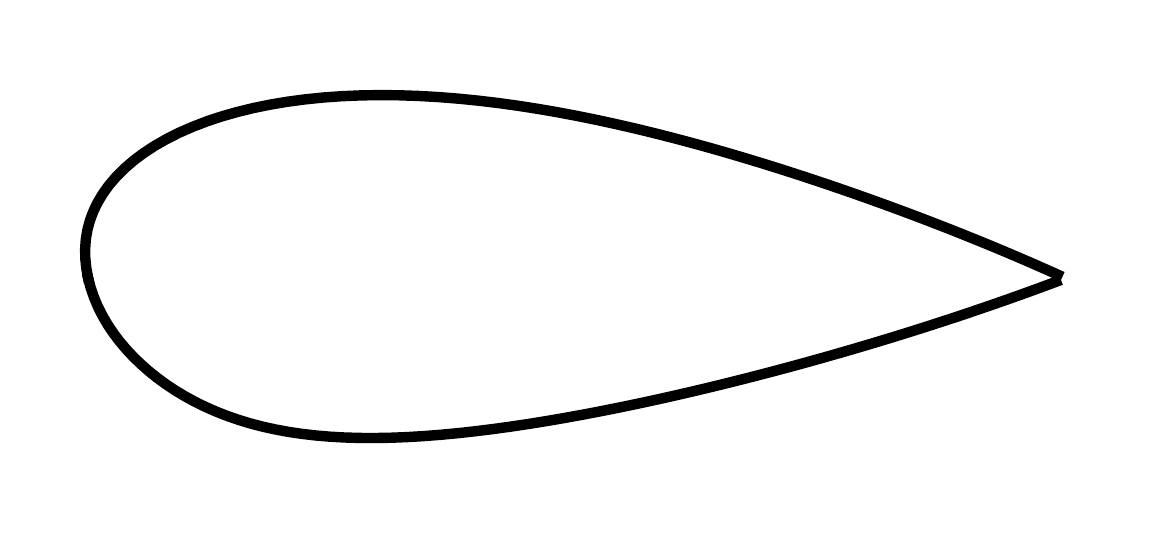} &  \includegraphics [width=0.15\columnwidth]{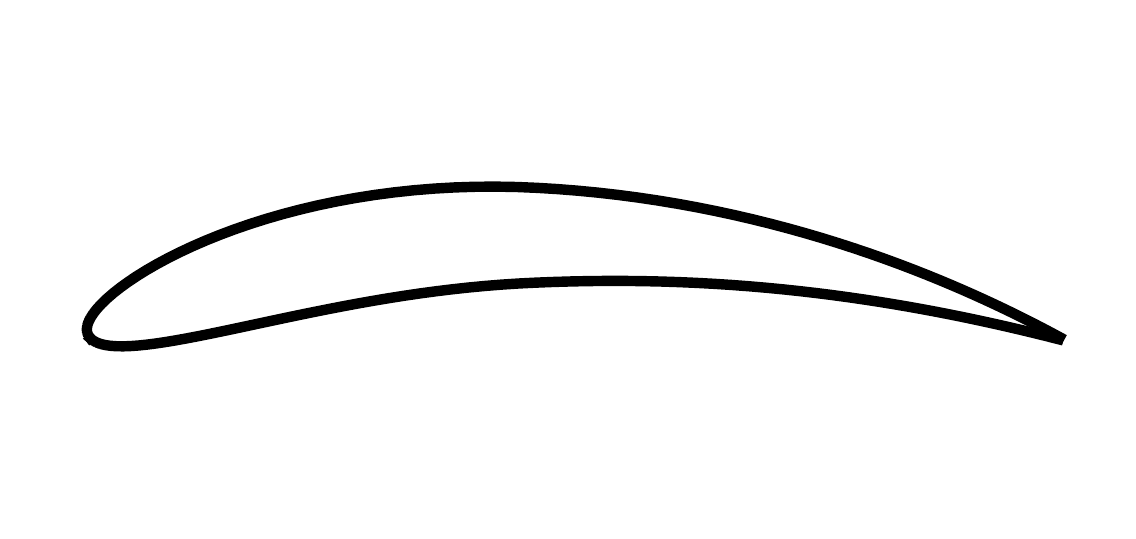} & \includegraphics [width=0.15\columnwidth]{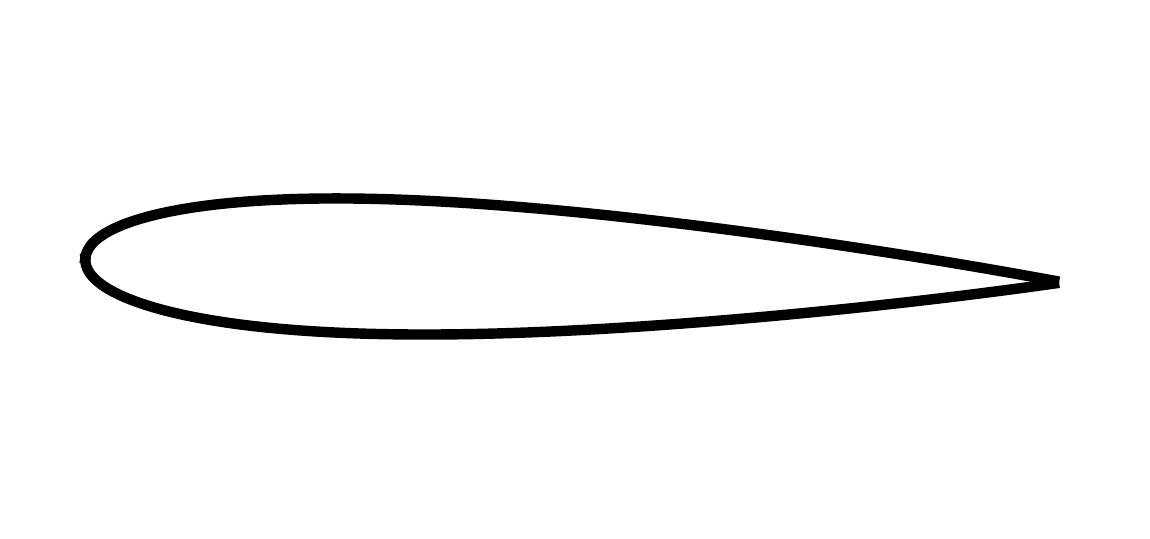} & \includegraphics [width=0.15\columnwidth]{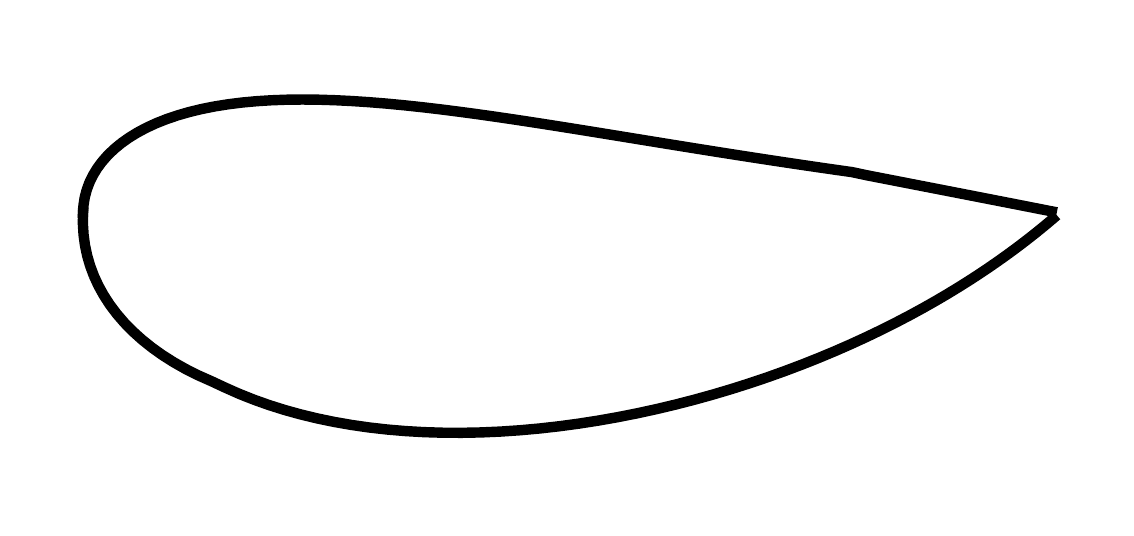} & \includegraphics [width=0.15\columnwidth]{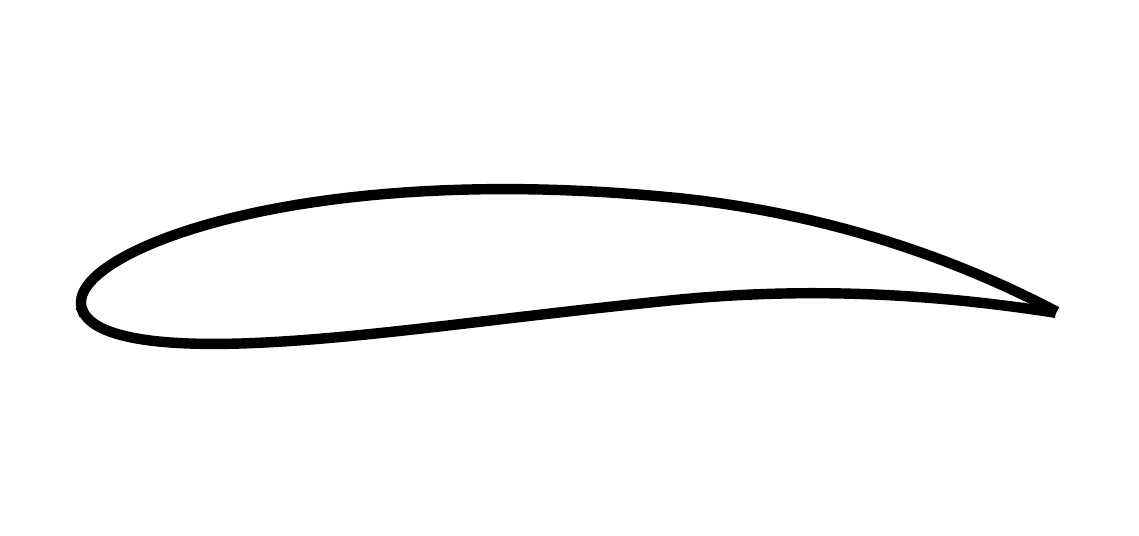} & \includegraphics [width=0.15\columnwidth]{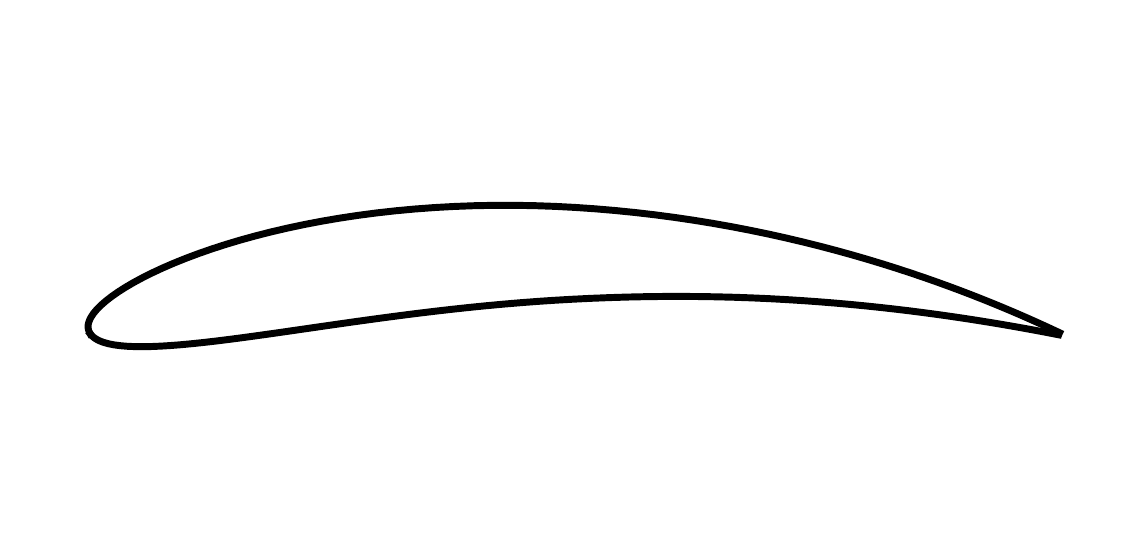} \\
\bottomrule
\end{tabular}}
\caption{Generated airfoils with their lift and drag values after 1500 calls to XFoil \citep{xfoil}.}
\label{tab:airfoil}
\vspace{-2mm}
\end{table}

\subsection{Effect of Manifold and Problem Dimensionality}

In this section, we perform a controlled experiment to understand the effect of problem dimensionality ($d$) and manifold dimensionality ($n$). We generate a collection of synthetic optimization problems. All synthesized functions follow the manifold hypothesis: $f(\xx)=g(r(\xx,\theta^\star))$, where $r(\xx, \theta^\star)$ is a multilayer perceptron with the architecture $\text{Linear}(d,2n)\rightarrow\text{ReLU}\rightarrow\text{Linear}(2n,n)$ and $g(\cdot)$ is a randomly sampled convex quadratic function. In order to sample a convex quadratic function, we sample the parameters of the quadratic function from a Gaussian distribution and project the result to the space of convex quadratic functions.

We choose $d \in \{100, 1000\}$ and plot the objective value with respect to the number of function evaluations for various manifold dimensionalities $n$ in Figure~\ref{fig:synthetic_results}.  The results suggest that for a given ambient dimensionality $d$, the lower the dimensionality of the data manifold $n$, the more sample-efficient our method. In concordance with our theoretical analysis, the improvement is very significant when $n\ll d$, as can be seen in the cases \mbox{$n=5, d=1000$} and \mbox{$n=2, d=100$}.

Interestingly, our method is effective even when the manifold assumption is violated ($n=d$). We hypothesize that this is due to anisotropy in the geometry of the problem. Although all directions are important when $n=d$, some will result in faster search since the function changes more along them. It appears that our method can identify these direction and thus accelerate the search.

\begin{figure}[b]
\vspace{-2mm}
\centering
\includegraphics[width=\columnwidth]{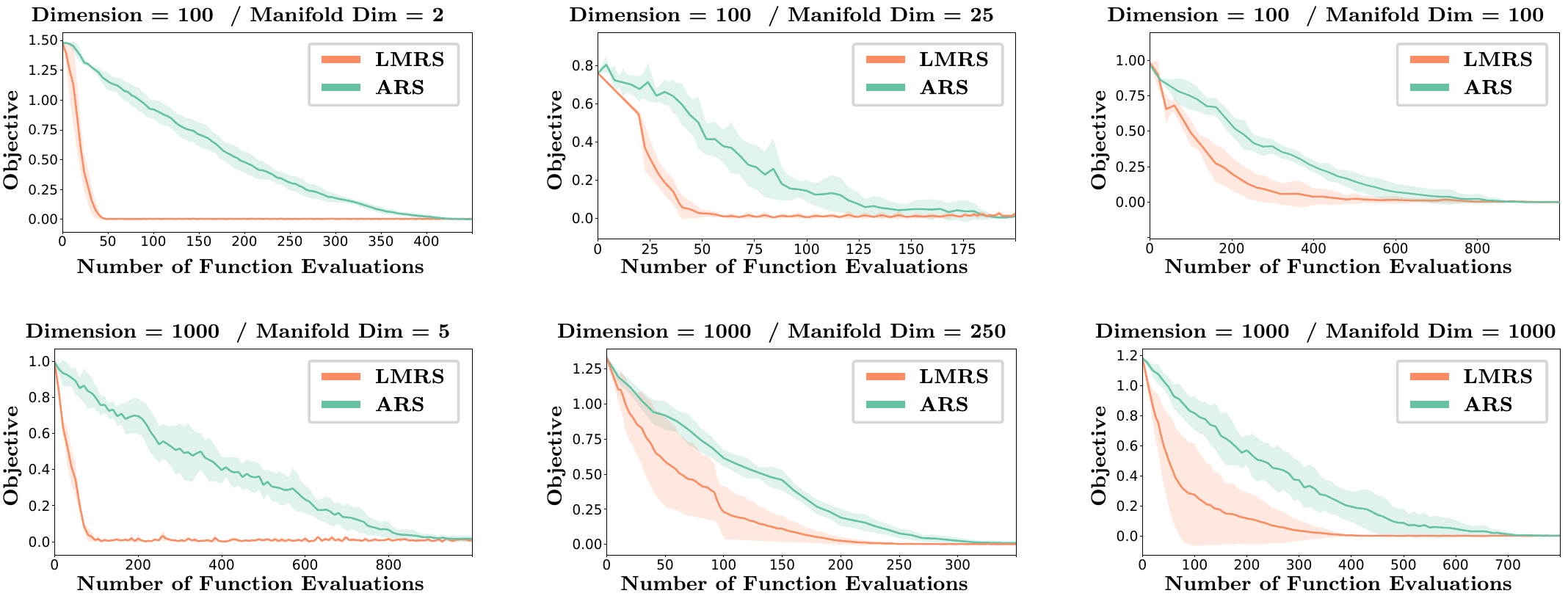}
\caption{Effect of problem dimensionality and manifold dimensionality. Average objective value over 10 random seeds vs.\ number of function evaluations. Shaded areas represent 1 standard deviation.}
\label{fig:synthetic_results}
\end{figure}

\section{Related Work}
\mypara{Derivative-free optimization.} We summarize the work on DFO that is relevant to our paper. For a complete review, readers are referred to \citet{DFOSurvey} and \citet{DFOBook}. We are specifically interested in random search methods, which have been developed as early as \citet{RS1} and \citet{RS2}. Convergence properties of these methods have recently been analyzed by \citet{Agarwal2010}, \citet{Bach2016}, \citet{Nesterov2017}, and \citet{Dvurechensky2018}. A lower bound on the sample complexity for the convex case has been given by \citet{Duchi2015} and \citet{Jamieson2012}. Bandit convex optimization is also highly relevant and we utilize the work of \citet{Flaxman2005} and \citet{Shamir2013}.

\mypara{Random search for learning continuous control.} Learning continuous control is an active research topic that has received significant interest in the reinforcement learning community. Recently, \citet{Salimans2017} and \citet{Mania2018} have shown that random search methods are competitive with state-of-the-art policy gradient algorithms in this setting. \citet{Vemula2019} analyzed this phenomenon theoretically and characterized the sample complexity of random search and policy gradient methods for continuous control.

\mypara{Adaptive random search.} There are various methods in the literature that adapt the search space by using anisotropic covariance as in the case of CMA-ES \citep{HansenCMAES, HansenTutorial}, guided evolutionary search \citep{Maheswaranathan2018}, and active subspace methods \citep{Choromanski2019}. There are also methods that enforce structure such as orthogonality in the search directions \citep{Choromanski2018}. Other methods use information geometry tools as in \citet{Wierstra2008} and \citet{Glasmachers2010}. \citet{Lehman2018} use gradient magnitudes to guide neuro-evolutionary search. \citet{Staines2012} use a variational lower bound to guide the search. In contrast to these methods, we explicitly posit nonlinear manifold structure and directly learn this latent manifold via online learning. Our method is the only one that can learn an arbitrary nonlinear search space given a parametric class that characterizes its geometry.

\mypara{Adaptive Bayesian optimization.} Bayesian optimization (BO) is another approach to zeroth-order optimization with desirable theoretical properties \citep{Srinivas2010}. Although we are only interested in methods based on random search, some of the ideas we use have been utilized in BO. \citet{Calandra2016} used the manifold assumption for Gaussian processes. In contrast to our method, they use autoencoders for learning the manifold and assume initial availability of offline data. Similarly, \citet{Djolonga2013} consider the case where the function of interest lies on some linear manifold and collect offline data to identify this manifold. In contrast, we only use online information and our models are nonlinear. \citet{Wang2016} and \citet{Kirschner2019} propose using random low-dimensional features instead of adaptation. \citet{Rolland2018} design adaptive BO methods for additive models. Major distinctions between our work and the adaptive BO literature include our use of nonlinear manifolds, no reliance on offline data collection, and formulation of the problem as online learning.

\section{Conclusion}

We presented Learned Manifold Random Search (LMRS): a derivative-free optimization algorithm. Our algorithm learns the underlying geometric structure of the problem online while performing the optimization. Our experiments suggest that LMRS is effective on a wide range of problems and significantly outperforms prior derivative-free optimization algorithms from multiple research communities.

\bibliographystyle{iclr2020_conference}

\begin{thebibliography}{50}
\providecommand{\natexlab}[1]{#1}
\providecommand{\url}[1]{\texttt{#1}}
\expandafter\ifx\csname urlstyle\endcsname\relax
  \providecommand{\doi}[1]{doi: #1}\else
  \providecommand{\doi}{doi: \begingroup \urlstyle{rm}\Url}\fi

\bibitem[Agarwal et~al.(2010)Agarwal, Dekel, and Xiao]{Agarwal2010}
Alekh Agarwal, Ofer Dekel, and Lin Xiao.
\newblock Optimal algorithms for online convex optimization with multi-point
  bandit feedback.
\newblock In \emph{Conference on Learning Theory (COLT)}, 2010.

\bibitem[Bach \& Perchet(2016)Bach and Perchet]{Bach2016}
Francis~R. Bach and Vianney Perchet.
\newblock Highly-smooth zero-th order online optimization.
\newblock In \emph{Conference on Learning Theory (COLT)}, 2016.

\bibitem[Biscani et~al.(2019)Biscani, Izzo, Jakob, Martens, Mereta, Kaldemeyer,
  Lyskov, Corlay, Pritchard, Manani, et~al.]{pagmo}
Francesco Biscani, Dario Izzo, Wenzel Jakob, Marcus Martens, Alessio Mereta,
  Cord Kaldemeyer, Sergey Lyskov, Sylvain Corlay, Benjamin Pritchard, Kishan
  Manani, et~al.
\newblock {Esa/Pagmo2}: Pagmo 2.10, 2019.

\bibitem[Calandra et~al.(2016)Calandra, Peters, Rasmussen, and
  Deisenroth]{Calandra2016}
Roberto Calandra, Jan Peters, Carl~Edward Rasmussen, and Marc~Peter Deisenroth.
\newblock Manifold {Gaussian} processes for regression.
\newblock In \emph{International Joint Conference on Neural Networks (IJCNN)},
  2016.

\bibitem[Choromanski et~al.(2018)Choromanski, Rowland, Sindhwani, Turner, and
  Weller]{Choromanski2018}
Krzysztof Choromanski, Mark Rowland, Vikas Sindhwani, Richard~E. Turner, and
  Adrian Weller.
\newblock Structured evolution with compact architectures for scalable policy
  optimization.
\newblock In \emph{International Conference on Machine Learning (ICML)}, 2018.

\bibitem[Choromanski et~al.(2019)Choromanski, Pacchiano, Parker{-}Holder, Tang,
  and Sindhwani]{Choromanski2019}
Krzysztof Choromanski, Aldo Pacchiano, Jack Parker{-}Holder, Yunhao Tang, and
  Vikas Sindhwani.
\newblock From complexity to simplicity: Adaptive {ES}-active subspaces for
  blackbox optimization.
\newblock In \emph{Neural Information Processing Systems}, 2019.

\bibitem[Conn et~al.(2009)Conn, Scheinberg, and Vicente]{DFOBook}
Andrew~R Conn, Katya Scheinberg, and Luis~N Vicente.
\newblock \emph{Introduction to Derivative-Free Optimization}.
\newblock SIAM, 2009.

\bibitem[Cust{\'o}dio et~al.(2017)Cust{\'o}dio, Scheinberg, and
  Nunes~Vicente]{DFOSurvey}
Ana~Lu{\'\i}sa Cust{\'o}dio, Katya Scheinberg, and Lu{\'\i}s Nunes~Vicente.
\newblock Methodologies and software for derivative-free optimization.
\newblock \emph{Advances and Trends in Optimization with Engineering
  Applications}, 2017.

\bibitem[Djolonga et~al.(2013)Djolonga, Krause, and Cevher]{Djolonga2013}
Josip Djolonga, Andreas Krause, and Volkan Cevher.
\newblock High-dimensional {Gaussian} process bandits.
\newblock In \emph{Neural Information Processing Systems}, 2013.

\bibitem[Dolan \& Mor{\'{e}}(2002)Dolan and Mor{\'{e}}]{dolan_more}
Elizabeth~D. Dolan and Jorge~J. Mor{\'{e}}.
\newblock Benchmarking optimization software with performance profiles.
\newblock \emph{Mathematical Programming}, 91, 2002.

\bibitem[Drela(1989)]{xfoil}
Mark Drela.
\newblock {XFOIL}: An analysis and design system for low {Reynolds} number
  airfoils.
\newblock In \emph{Low Reynolds Number Aerodynamics}, 1989.

\bibitem[Duchi et~al.(2015)Duchi, Jordan, Wainwright, and Wibisono]{Duchi2015}
John~C. Duchi, Michael~I. Jordan, Martin~J. Wainwright, and Andre Wibisono.
\newblock Optimal rates for zero-order convex optimization: The power of two
  function evaluations.
\newblock \emph{IEEE Transactions on Information Theory}, 2015.

\bibitem[Dvurechensky et~al.(2018)Dvurechensky, Gasnikov, and
  Gorbunov]{Dvurechensky2018}
Pavel Dvurechensky, Alexander Gasnikov, and Eduard Gorbunov.
\newblock An accelerated method for derivative-free smooth stochastic convex
  optimization.
\newblock \emph{arXiv:1802.09022}, 2018.

\bibitem[Flaxman et~al.(2005)Flaxman, Kalai, and McMahan]{Flaxman2005}
Abraham Flaxman, Adam~Tauman Kalai, and H.~Brendan McMahan.
\newblock Online convex optimization in the bandit setting: Gradient descent
  without a gradient.
\newblock In \emph{Symposium on Discrete Algorithms (SODA)}, 2005.

\bibitem[Freedman(1975)]{Freedman1975}
David~A Freedman.
\newblock On tail probabilities for martingales.
\newblock \emph{Annals of Probability}, 3\penalty0 (1), 1975.

\bibitem[Gardner et~al.(2018)Gardner, Pleiss, Bindel, Weinberger, and
  Wilson]{gardner2018gpytorch}
Jacob~R Gardner, Geoff Pleiss, David Bindel, Kilian~Q Weinberger, and
  Andrew~Gordon Wilson.
\newblock {GPyTorch}: Blackbox matrix-matrix {Gaussian} process inference with
  {GPU} acceleration.
\newblock In \emph{Neural Information Processing Systems}, 2018.

\bibitem[Ghadimi \& Lan(2013)Ghadimi and Lan]{sgd_convergence}
Saeed Ghadimi and Guanghui Lan.
\newblock Stochastic first- and zeroth-order methods for nonconvex stochastic
  programming.
\newblock \emph{{SIAM} Journal on Optimization}, 23\penalty0 (4), 2013.

\bibitem[Ginsbourger et~al.(2010)Ginsbourger, Le~Riche, and
  Carraro]{ginsbourger2010kriging}
David Ginsbourger, Rodolphe Le~Riche, and Laurent Carraro.
\newblock Kriging is well-suited to parallelize optimization.
\newblock In \emph{Computational Intelligence in Expensive Optimization
  Problems}. Springer, 2010.

\bibitem[Glasmachers et~al.(2010)Glasmachers, Schaul, Sun, Wierstra, and
  Schmidhuber]{Glasmachers2010}
Tobias Glasmachers, Tom Schaul, Yi~Sun, Daan Wierstra, and J{\"{u}}rgen
  Schmidhuber.
\newblock Exponential natural evolution strategies.
\newblock In \emph{Genetic and Evolutionary Computation Conference (GECCO)},
  2010.

\bibitem[Hansen(2016)]{HansenTutorial}
Nikolaus Hansen.
\newblock The {CMA} evolution strategy: {A} tutorial.
\newblock \emph{arXiv:1604.00772}, 2016.

\bibitem[Hansen et~al.(2003)Hansen, M{\"{u}}ller, and
  Koumoutsakos]{HansenCMAES}
Nikolaus Hansen, Sibylle~D. M{\"{u}}ller, and Petros Koumoutsakos.
\newblock Reducing the time complexity of the derandomized evolution strategy
  with covariance matrix adaptation {(CMA-ES)}.
\newblock \emph{Evolutionary Computation}, 11\penalty0 (1), 2003.

\bibitem[Hansen et~al.(2019)Hansen, Akimoto, and Baudis]{pycma}
Nikolaus Hansen, Youhei Akimoto, and Petr Baudis.
\newblock {CMA-ES/pycma} on {G}ithub.
\newblock Zenodo, DOI:10.5281/zenodo.2559634, February 2019.

\bibitem[Hazan(2016)]{OLBook1}
Elad Hazan.
\newblock Introduction to online convex optimization.
\newblock \emph{Foundations and Trends{\textregistered} in Optimization},
  2\penalty0 (3-4), 2016.

\bibitem[Hill et~al.(2018)Hill, Raffin, Ernestus, Gleave, Kanervisto, Traore,
  Dhariwal, Hesse, Klimov, Nichol, et~al.]{stable_baselines}
Ashley Hill, Antonin Raffin, Maximilian Ernestus, Adam Gleave, Anssi
  Kanervisto, Rene Traore, Prafulla Dhariwal, Christopher Hesse, Oleg Klimov,
  Alex Nichol, et~al.
\newblock Stable baselines.
\newblock \url{https://github.com/hill-a/stable-baselines}, 2018.

\bibitem[Jamieson et~al.(2012)Jamieson, Nowak, and Recht]{Jamieson2012}
Kevin~G. Jamieson, Robert~D. Nowak, and Benjamin Recht.
\newblock Query complexity of derivative-free optimization.
\newblock In \emph{Neural Information Processing Systems}, 2012.

\bibitem[Kakade \& Tewari(2009)Kakade and Tewari]{Kakade2009}
Sham~M Kakade and Ambuj Tewari.
\newblock On the generalization ability of online strongly convex programming
  algorithms.
\newblock In \emph{Neural Information Processing Systems}, 2009.

\bibitem[Kalai \& Vempala(2005)Kalai and Vempala]{Kalai05}
Adam~Tauman Kalai and Santosh Vempala.
\newblock Efficient algorithms for online decision problems.
\newblock \emph{Journal of Computer and System Sciences}, 71\penalty0 (3),
  2005.

\bibitem[Kirschner et~al.(2019)Kirschner, Mutn{\'{y}}, Hiller, Ischebeck, and
  Krause]{Kirschner2019}
Johannes Kirschner, Mojm{\'{\i}}r Mutn{\'{y}}, Nicole Hiller, Rasmus Ischebeck,
  and Andreas Krause.
\newblock Adaptive and safe {Bayesian} optimization in high dimensions via
  one-dimensional subspaces.
\newblock In \emph{International Conference on Machine Learning (ICML)}, 2019.

\bibitem[Kroese et~al.(2013)Kroese, Taimre, and Botev]{kroese2013handbook}
Dirk~P Kroese, Thomas Taimre, and Zdravko~I Botev.
\newblock \emph{Handbook of Monte Carlo Methods}.
\newblock John Wiley \& Sons, 2013.

\bibitem[Lehman et~al.(2018)Lehman, Chen, Clune, and Stanley]{Lehman2018}
Joel Lehman, Jay Chen, Jeff Clune, and Kenneth~O. Stanley.
\newblock Safe mutations for deep and recurrent neural networks through output
  gradients.
\newblock In \emph{Genetic and Evolutionary Computation Conference (GECCO)},
  2018.

\bibitem[Maheswaranathan et~al.(2019)Maheswaranathan, Metz, Tucker, and
  Sohl{-}Dickstein]{Maheswaranathan2018}
Niru Maheswaranathan, Luke Metz, George Tucker, and Jascha Sohl{-}Dickstein.
\newblock Guided evolutionary strategies: Escaping the curse of dimensionality
  in random search.
\newblock In \emph{International Conference on Machine Learning (ICML)}, 2019.

\bibitem[Mania et~al.(2018)Mania, Guy, and Recht]{Mania2018}
Horia Mania, Aurelia Guy, and Benjamin Recht.
\newblock Simple random search of static linear policies is competitive for
  reinforcement learning.
\newblock In \emph{Neural Information Processing Systems}, 2018.

\bibitem[Marmin et~al.(2015)Marmin, Chevalier, and Ginsbourger]{MarminCG15}
S{\'{e}}bastien Marmin, Cl{\'{e}}ment Chevalier, and David Ginsbourger.
\newblock Differentiating the multipoint expected improvement for optimal batch
  design.
\newblock In \emph{Workshop on Machine Learning, Optimization, and Big Data},
  2015.

\bibitem[Matyas(1965)]{RS1}
J~Matyas.
\newblock Random optimization.
\newblock \emph{Automation and Remote Control}, 26\penalty0 (2), 1965.

\bibitem[Nesterov \& Spokoiny(2017)Nesterov and Spokoiny]{Nesterov2017}
Yurii Nesterov and Vladimir~G. Spokoiny.
\newblock Random gradient-free minimization of convex functions.
\newblock \emph{Foundations of Computational Mathematics}, 17\penalty0 (2),
  2017.

\bibitem[Rechenberg(1973)]{RS2}
Ingo Rechenberg.
\newblock Evolutionsstrategie - optimierung technischer systeme nach prinzipien
  der biologischen information.
\newblock \emph{Stuttgart-Bad Cannstatt: Friedrich Frommann Verlag}, 1973.

\bibitem[Rolland et~al.(2018)Rolland, Scarlett, Bogunovic, and
  Cevher]{Rolland2018}
Paul Rolland, Jonathan Scarlett, Ilija Bogunovic, and Volkan Cevher.
\newblock High-dimensional {Bayesian} optimization via additive models with
  overlapping groups.
\newblock In \emph{International Conference on Artificial Intelligence and
  Statistics (AISTATS)}, 2018.

\bibitem[Rontsis et~al.(2017)Rontsis, Osborne, and
  Goulart]{rontsis2017distributionally}
Nikitas Rontsis, Michael~A. Osborne, and Paul~J. Goulart.
\newblock Distributionally ambiguous optimization techniques for batch
  {Bayesian} optimization.
\newblock \emph{arXiv:1707.04191}, 2017.

\bibitem[Salimans et~al.(2017)Salimans, Ho, Chen, and Sutskever]{Salimans2017}
Tim Salimans, Jonathan Ho, Xi~Chen, and Ilya Sutskever.
\newblock Evolution strategies as a scalable alternative to reinforcement
  learning.
\newblock \emph{arXiv:1703.03864}, 2017.

\bibitem[Schulman et~al.(2017)Schulman, Wolski, Dhariwal, Radford, and
  Klimov]{ppo}
John Schulman, Filip Wolski, Prafulla Dhariwal, Alec Radford, and Oleg Klimov.
\newblock Proximal policy optimization algorithms.
\newblock \emph{arXiv:1707.06347}, 2017.

\bibitem[Shalev-Shwartz(2012)]{OLBook2}
Shai Shalev-Shwartz.
\newblock Online learning and online convex optimization.
\newblock \emph{Foundations and Trends{\textregistered} in Machine Learning},
  4\penalty0 (2), 2012.

\bibitem[Shamir(2013)]{Shamir2013}
Ohad Shamir.
\newblock On the complexity of bandit and derivative-free stochastic convex
  optimization.
\newblock In \emph{Conference on Learning Theory (COLT)}, 2013.

\bibitem[Srinivas et~al.(2010)Srinivas, Krause, Kakade, and
  Seeger]{Srinivas2010}
Niranjan Srinivas, Andreas Krause, Sham~M. Kakade, and Matthias~W. Seeger.
\newblock Gaussian process optimization in the bandit setting: No regret and
  experimental design.
\newblock In \emph{International Conference on Machine Learning (ICML)}, 2010.

\bibitem[Staines \& Barber(2012)Staines and Barber]{Staines2012}
Joe Staines and David Barber.
\newblock Variational optimization.
\newblock \emph{arXiv:1212.4507}, 2012.

\bibitem[Todorov et~al.(2012)Todorov, Erez, and Tassa]{mujoco}
Emanuel Todorov, Tom Erez, and Yuval Tassa.
\newblock Mujoco: {A} physics engine for model-based control.
\newblock In \emph{International Conference on Intelligent Robots and Systems
  (IROS)}, 2012.

\bibitem[Vemula et~al.(2019)Vemula, Sun, and Bagnell]{Vemula2019}
Anirudh Vemula, Wen Sun, and J.~Andrew Bagnell.
\newblock Contrasting exploration in parameter and action space: {A}
  zeroth-order optimization perspective.
\newblock In \emph{International Conference on Artificial Intelligence and
  Statistics (AISTATS)}, 2019.

\bibitem[Wang et~al.(2016)Wang, Hutter, Zoghi, Matheson, and
  de~Freitas]{Wang2016}
Ziyu Wang, Frank Hutter, Masrour Zoghi, David Matheson, and Nando de~Freitas.
\newblock Bayesian optimization in a billion dimensions via random embeddings.
\newblock \emph{Journal of Artificial Intelligence Research}, 55, 2016.

\bibitem[Wierstra et~al.(2014)Wierstra, Schaul, Glasmachers, Sun, Peters, and
  Schmidhuber]{Wierstra2008}
Daan Wierstra, Tom Schaul, Tobias Glasmachers, Yi~Sun, Jan Peters, and
  J{\"{u}}rgen Schmidhuber.
\newblock Natural evolution strategies.
\newblock \emph{Journal of Machine Learning Research (JMLR)}, 15, 2014.

\bibitem[Wilson \& Nickisch(2015)Wilson and Nickisch]{Wilson2015}
Andrew~Gordon Wilson and Hannes Nickisch.
\newblock Kernel interpolation for scalable structured {Gaussian} processes
  {(KISS-GP)}.
\newblock In \emph{International Conference on Machine Learning (ICML)}, 2015.

\bibitem[Zhang et~al.(2017)Zhang, Bengio, Hardt, Recht, and
  Vinyals]{ZhangICLR2017}
Chiyuan Zhang, Samy Bengio, Moritz Hardt, Benjamin Recht, and Oriol Vinyals.
\newblock Understanding deep learning requires rethinking generalization.
\newblock In \emph{ICLR}, 2017.

\end{thebibliography}

\clearpage

\appendix
\section{Proofs}
\subsection{Gradient Estimator}
\label{sec:grad_est}

In this section, we show that when random directions are sampled in the column space of the orthonormal matrix $\UU$, perturbations give biased gradient estimates of the manifold smoothed function. Moreover, when $\UU = \UU^\star$, resulting gradients are unbiased. We formalize this with the following lemma.

\begin{lemma}
Let $\UU^\star$ be an orthonormal basis for the tangent space of the $n$-dimensional manifold $\mathcal{M}$ at point $\xx \in \mathcal{M}$, $\UU$ be another orthonormal matrix, and $f$ be a function defined on this manifold.
Fix $\delta>0$. Then
\[
\Ee_{\sss \sim \Se^{n-1}} [f(\xx + \delta \UU \sss)\UU \sss] = \frac{\delta}{n} \nabla_{\xx} \tilde{f}_\UU(\xx) + \textsc{bias}(\UU)
\] 
where $\textsc{bias}(\UU)=\Ee_{\sss \in \Se^{n-1}} \left[ f(\xx + \delta \UU \sss) [\UU - \UU^\star] \sss \right]$. Moreover, bias is $0$ and the resulting estimator is unbiased when $\UU=\UU^\star$.
\label{lemma:manifold_stoke}
\end{lemma}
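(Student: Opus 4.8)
The plan is to reduce the claim to the spherical Stokes' theorem already used in (\ref{eq:unbiased_grad}), applied on the $n$-dimensional sphere rather than the ambient sphere. First I would split the estimator additively. Write $\UU\sss = \UU^\star\sss + (\UU - \UU^\star)\sss$, so that
\[
\Ee_{\sss \sim \Se^{n-1}} [f(\xx + \delta \UU \sss)\UU \sss]
= \Ee_{\sss \sim \Se^{n-1}} [f(\xx + \delta \UU \sss)\UU^\star \sss]
+ \Ee_{\sss \sim \Se^{n-1}} [f(\xx + \delta \UU \sss)(\UU - \UU^\star) \sss].
\]
The second term is exactly $\textsc{bias}(\UU)$ by definition, so it remains only to show that the first term equals $\frac{\delta}{n}\nabla_{\xx}\tilde{f}_\UU(\xx)$. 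For this I would introduce the pulled-back function $h:\Re^n \to \Re$, $h(\zz) = f(\xx + \delta \UU \zz)$ (with $\xx$ and $\delta$ fixed), whose $\delta'$-smoothed version on $\Be^n$ is $\hat h(\zz) = \Ee_{\tilde\vv\sim\Be^n}[h(\zz + \delta'\tilde\vv)]$; taking $\delta' = 1$ and $\zz = \bZero$ gives $\hat h(\bZero) = \Ee_{\tilde\vv\sim\Be^n}[f(\xx + \delta\UU\tilde\vv)] = \tilde f_\UU(\xx)$ by the definition of the manifold-smoothed function in the excerpt.

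Next I would apply the $n$-dimensional version of Flaxman's identity (\ref{eq:unbiased_grad}) to $h$ at the origin: $\Ee_{\sss\sim\Se^{n-1}}[h(\sss)\sss] = \frac{1}{n}\nabla_{\zz}\hat h(\bZero)$ (here the radius is absorbed into $h$, so the "$\delta$" of (\ref{eq:unbiased_grad}) is $1$). Unwinding the definition of $h$, the left side is $\Ee_{\sss\sim\Se^{n-1}}[f(\xx + \delta\UU\sss)\sss]$, which is not yet the quantity I want — I want $\UU^\star\sss$ multiplying the scalar, not $\sss$. So I would multiply the vector identity on the left by $\UU^\star$: $\UU^\star\,\Ee_{\sss}[h(\sss)\sss] = \Ee_{\sss}[f(\xx+\delta\UU\sss)\UU^\star\sss] = \frac{1}{n}\UU^\star\nabla_{\zz}\hat h(\bZero)$. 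Finally I would compute $\UU^\star\nabla_{\zz}\hat h(\bZero)$. By the chain rule, $\nabla_{\zz} h(\zz) = \delta\,\UU^\intercal \nabla_{\xx} f(\xx + \delta\UU\zz)$; smoothing commutes with this, so $\nabla_{\zz}\hat h(\bZero) = \delta\,\UU^\intercal\nabla_{\xx}\tilde f_\UU(\xx)$ — here I must be slightly careful, because $\tilde f_\UU$ involves the matrix $\UU$ evaluated at the \emph{running} point, so strictly the gradient $\nabla_{\xx}\tilde f_\UU(\xx)$ in the lemma statement should be interpreted (as in the body of the paper) with $\JJ_q$/$\UU$ held fixed at $\xx$, matching exactly the $\hat h$ constructed above. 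Then $\UU^\star\nabla_{\zz}\hat h(\bZero) = \delta\,\UU^\star\UU^\intercal\nabla_{\xx}\tilde f_\UU(\xx)$, and since $\nabla_{\xx}\tilde f_\UU$ lies in the column space of $\UU^\star$ (the gradient of a function that only varies along the tangent space is tangent) and $\UU^\star$ has orthonormal columns, $\UU^\star\UU^{\star\intercal}$ acts as the identity on it; combined with the fact that the relevant component is the one captured by $\UU$, this collapses to $\delta\nabla_{\xx}\tilde f_\UU(\xx)$, giving the claimed $\frac{\delta}{n}\nabla_{\xx}\tilde f_\UU(\xx)$ for the first term.

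For the ``moreover'' clause, when $\UU = \UU^\star$ the vector $(\UU - \UU^\star)\sss$ is identically zero, so $\textsc{bias}(\UU^\star) = \bZero$ trivially, and the estimator is unbiased. The main obstacle I anticipate is the bookkeeping in the last step: making precise what $\nabla_{\xx}\tilde f_\UU(\xx)$ means (fixed-$\UU$ smoothing versus letting $\UU$ vary with $\xx$), and justifying that $\UU^\star\UU^\intercal$ can be replaced by the identity when hitting $\nabla_{\xx}\tilde f_\UU(\xx)$ — this needs the observation that the smoothed gradient is tangent (lies in $\mathrm{col}(\UU^\star)$) and that the construction only probes the $\UU$-directions, so there is no harm in the $\UU$ versus $\UU^\star$ discrepancy inside the scalar $f(\xx+\delta\UU\sss)$ being separated out exactly as the bias term. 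Everything else is a direct transcription of the known one-sphere Stokes identity to dimension $n$ via the linear change of variables $\zz \mapsto \xx + \delta\UU\zz$.
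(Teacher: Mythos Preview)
Your plan is the same as the paper's: split $\UU\sss=\UU^\star\sss+(\UU-\UU^\star)\sss$, identify the second piece as $\textsc{bias}(\UU)$, and reduce the first piece to the $n$-dimensional Flaxman/Stokes identity. The paper packages that last reduction as one invocation of the ``manifold Stokes' theorem'' (their step (c)), after inserting the harmless factor ${\UU^\star}^{\intercal}\UU^\star=I$; your version with the pulled-back function $h(\zz)=f(\xx+\delta\UU\zz)$ and the standard $n$-dimensional Flaxman identity followed by left-multiplication by $\UU^\star$ is exactly the same computation made explicit.

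The obstacle you flag in the final line is genuine, and the paper does not actually resolve it either. After your chain-rule step you obtain $\Ee_{\sss}[f(\xx+\delta\UU\sss)\UU^\star\sss]=\tfrac{\delta}{n}\,\UU^\star\UU^{\intercal}\nabla_{\xx}\tilde f_\UU(\xx)$, and you need $\UU^\star\UU^{\intercal}$ to act as the identity on $\nabla_{\xx}\tilde f_\UU(\xx)$. Even granting that this gradient lies in $\mathrm{col}(\UU^\star)$, the matrix $\UU^\star\UU^{\intercal}$ is \emph{not} the orthogonal projector $\UU^\star{\UU^\star}^{\intercal}$, so the ``collapse'' you describe does not follow in general (take $d=2$, $n=1$, $f(x_1,x_2)=x_1$, $\UU^\star=(1,0)^{\intercal}$, $\UU=(0,1)^{\intercal}$: the left side of the lemma is $\bZero$ while $\tfrac{\delta}{n}\nabla_{\xx}\tilde f_\UU(\xx)+\textsc{bias}(\UU)=(\delta,0)^{\intercal}$). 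In the paper this same identification is hidden inside step~(c), where a surface integral that manifestly lies in $\mathrm{col}(\UU^\star)$ is equated directly with the full ambient gradient $\nabla_{\xx}$ of the ball integral without further argument. So your proposal matches the paper's route step for step; the difficulty you single out is exactly the step the paper asserts via ``manifold Stokes'' rather than justifies.
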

\begin{proof}
Without loss of generality, we can assume $\det \UU=1$ and $\det \UU^\star =1$. Using this remark, we can state the proof of the lemma as the straightforward application of the manifold Stoke's theorem
\begin{equation}
\begin{aligned}
\Ee_{\sss \in \Se^{n-1}} &\left[ f(\xx + \delta \UU \sss) \UU \sss \right]  = \Ee_{\sss \in \Se^{n-1}} \left[ f(\xx + \delta \UU \sss) \UU^\star \sss \right] + \Ee_{\sss \in \Se^{n-1}} \left[ f(\xx + \delta \UU \sss) [\UU - \UU^\star] \sss \right]  \\
&\stackrel{(a)}{=} \frac{1}{\text{vol}(\delta \Se^{n-1})}\int_{\delta \Se^{n-1}} f(\xx + \delta \UU \sss) \UU^\star \sss d\sss +   \underbrace{\Ee_{\sss \in \Se^{n-1}} \left[ f(\xx + \delta \UU \sss) [\UU - \UU^\star] \sss \right]}_{\textsc{bias}(\UU)} \\
&= \frac{1}{\text{vol}(\delta \Se^{n-1})} \int_{\delta \Se^{n-1}} f(\xx + \delta \UU \sss) \UU^\star \det \UU^\star \frac{\sss}{\|\sss \|}d\sss  + \textsc{bias}(\UU)\\
&\stackrel{(b)}{=}   \frac{1}{\text{vol}(\delta \Se^{n-1})} \int_{\delta \Se^{n-1}} f(\xx + \delta \UU {\UU^\star}^\intercal \UU^\star \sss) \UU^\star \det \UU^\star \frac{\sss}{\|\sss \|}d\sss  + \textsc{bias}(\UU)\\
&\stackrel{(c)}{=}  \frac{1}{\text{vol}(\delta \Se^{n-1})} \nabla_{\xx} \int_{\delta \Be^{n}} f(\xx + \delta\UU {\UU^\star}^\intercal \UU^\star \vv) d\vv + \textsc{bias}(\UU) \\
&\stackrel{(d)}{=} \frac{\text{vol}(\delta \Be^{n})}{\text{vol}(\delta \Se^{n-1})}  \nabla_{\xx} \frac{\int_{\delta \Be^{n}} f(\xx + \delta\UU \vv) d\vv }{\text{vol}(\delta \Be^{n})} + \textsc{bias}(\UU) \\
&\stackrel{(e)}{=}\frac{\delta}{n} \nabla_{\xx} \tilde{f}_\UU(\xx) + \textsc{bias}(\UU).
\end{aligned}
\end{equation}
where $\text{vol}$ denotes volume, and we use the definition of the expectation in (a, e), orthonormality of $\UU^\star$ in (b,d), manifold Stoke's theorem in (c) and the fact that the ratio of volume to the surface area of a $n-$dimensional ball of radius $\delta$ is $\frac{\delta}{n}$ in (e). Moreover, bias vanishes when $\UU=\UU^\star$.
\end{proof}

\subsection{Sample Complexity for Random Search and Manifold Random Search}
\label{sec:proof_of_basic_convergence}

In this section, we bound the sample complexity of the random search (Algorithm~\ref{alg:ars}) and the manifold random search (Algorithm~\ref{alg:mrs}). Our analysis starts with studying the relationship between the function ($f$) and its smoothed ($\hat{f}$) as well as manifold smoothed ($\tilde{f}_\UU$) versions in section~\ref{sec:prelim_appendix}. We show that $L-$Lipschitzness and $\mu-$smoothness of the function extend to the smoothed functions. Moreover, we also bound the difference between the gradients of the original function and the gradients of the smoothed versions. Next, we study the second moment of the gradient estimator in section~\ref{sec:estimator_variance}. Finally, we state the sample complexity of SGD on non-convex functions in section~\ref{sec:sgd_unbiased}. Combining these results, we state the final sample complexity of random search and manifold random search in section~\ref{sec:ars_proof}\&\ref{sec:mrs_proof}.

\subsubsection{Convergence of SGD for non-convex fuctions}
\label{sec:sgd_unbiased}
The convergence of the SGD has been widely studied and here we state its convergence result for non-convex functions from \cite{sgd_convergence} as a Lemma and give its proof for the sake of completeness.

\begin{lemma}[Convergence of SGD \citep{sgd_convergence, Vemula2019}]
Consider running SGD on $f(\xx)$ that is $\mu$-smooth and $L$-Lipschitz for $T$ steps starting with initial solution $\xx^0$. Denote $\Omega^0 = f(\xx^0) - f(\xx^\star)$ where $\xx^\star$ is the globally optimal point and assume that the unbiased gradient estimate has second moment bounded with $V$. Then,
\begin{equation}
\frac{1}{T}\sum_{t=1}^T \Ee\|\nabla_{\xx} f(\xx^t)\|_2^2 \leq \sqrt{\frac{8\Omega^0\mu V}{T}}
\end{equation}
\label{lemma:sgd}
\end{lemma}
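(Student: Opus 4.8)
The plan is to run the classical descent-lemma argument for stochastic gradient descent on a smooth (possibly nonconvex) objective; the proof is short and self-contained, which is presumably why it is included. Write $\gg^t$ for the stochastic gradient used at step $t$, so the update is $\xx^{t+1} = \xx^t - \alpha\gg^t$, and recall the hypotheses $\Ee[\gg^t \mid \xx^t] = \nabla_{\xx}f(\xx^t)$ and $\Ee\|\gg^t\|_2^2 \le V$. First I would invoke $\mu$-smoothness through the quadratic upper bound
\[
f(\xx^{t+1}) \le f(\xx^t) + \inner{\nabla_{\xx}f(\xx^t)}{\xx^{t+1}-\xx^t} + \frac{\mu}{2}\|\xx^{t+1}-\xx^t\|_2^2 ,
\]
substitute the update, and take expectation conditioned on $\xx^t$. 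Unbiasedness turns the inner-product term into $-\alpha\|\nabla_{\xx}f(\xx^t)\|_2^2$ and the second-moment bound controls the quadratic term, giving
\[
\Ee[f(\xx^{t+1}) \mid \xx^t] \le f(\xx^t) - \alpha\|\nabla_{\xx}f(\xx^t)\|_2^2 + \tfrac{\mu\alpha^2}{2}V .
\]

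Next I would take total expectations, rearrange to isolate $\alpha\,\Ee\|\nabla_{\xx}f(\xx^t)\|_2^2$, and sum over the $T$ steps; the $f$-terms telescope, and since $f(\xx^\star)$ lower-bounds every iterate the telescoped difference is at most $\Omega^0 = f(\xx^0)-f(\xx^\star)$, while the noise contributes $\tfrac{\mu\alpha^2 T}{2}V$. Dividing by $\alpha T$ yields
\[
\frac{1}{T}\sum_{t=1}^T \Ee\|\nabla_{\xx}f(\xx^t)\|_2^2 \le \frac{\Omega^0}{\alpha T} + \frac{\mu\alpha}{2}V .
\]
The last step is to pick the (constant) step size that balances the two terms, namely $\alpha$ of order $\sqrt{\Omega^0/(\mu V T)}$; both terms then become of order $\sqrt{\Omega^0\mu V/T}$, and tracking the constants — for instance via the standard refinement $\alpha \le 1/\mu$, which is automatic once $T$ is large and lets one fold a stray gradient-norm term back to the left-hand side — delivers the stated bound $\sqrt{8\Omega^0\mu V/T}$. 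The inequality is comfortably loose, so the precise balancing constant is not delicate.

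I do not expect a genuine obstacle here; it is essentially a textbook computation. The two things to handle carefully are (i) the conditioning bookkeeping, so that the tower property cleanly converts the per-step conditional inequality into the averaged statement over all iterate randomness, and (ii) the fact that $\alpha$ must be set as a function of the horizon $T$ (and of $\Omega^0,\mu,V$) — this is exactly where the $1/\sqrt{T}$ rate comes from. It is also worth noting that only boundedness from below (so that $\Omega^0$ is finite) and $\mu$-smoothness are actually used in this lemma; the $L$-Lipschitz hypothesis plays no role here (it is needed elsewhere, \eg to bound $V$ for the zeroth-order estimators).
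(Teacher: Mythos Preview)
Your proposal is correct and follows essentially the same route as the paper: descent lemma from $\mu$-smoothness, conditional expectation using unbiasedness and the second-moment bound, telescoping sum bounded by $\Omega^0$, then balancing the step size. The paper simply plugs in $\alpha=\sqrt{2\Omega^0/(\mu T V)}$ to obtain the constant $\sqrt{8}$; your aside about the refinement $\alpha\le 1/\mu$ and ``folding a stray gradient-norm term back'' is unnecessary here, since with a bounded \emph{second moment} (rather than bounded variance) no such term appears---but this does not affect correctness.
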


\begin{proof}
We denote the step size as $\alpha$ and the unbiased gradient estimate as $\gg^t$. We analyze the step at $t$ as;
\begin{equation}
\begin{aligned}
f(\xx^{t+1}) &= f(\xx^t - \alpha \gg^t) \\
&\leq f(\xx^t) - \alpha \nabla_{\xx} f(\xx^t)^\intercal \gg^t + \frac{\mu \alpha^2}{2} \|\gg^t\|_2^2 \\
\end{aligned}
\end{equation}
where we used the $\mu-$smoothness of the function. Taking expectation of the inequality,
\begin{equation}
\begin{aligned}
\Ee_{\gg^{t}}[f(\xx^{t+1})] &\leq f(\xx^t) - \alpha \|\nabla_{\xx} f(\xx^t)\|_2^2 + \frac{\mu \alpha^2}{2} \Ee [\|\gg^t\|_2^2] \\
\end{aligned}
\label{eq:tower}
\end{equation}

Using the bounded second moment of the gradient, and summing from step $1$ to $T$,
\begin{equation}
\sum_{t=1}^T \Ee_{\gg^t}[f(\xx^{t+1})] \leq \sum_{t=1}^T f(\xx^t) - \alpha  \sum_{t=1}^T  \|\nabla_{\xx} f(\xx^t)\|_2^2 +  \frac{\mu \alpha^2 T V}{2}\\
\end{equation}
Re-arranging the terms, we obtain,
\begin{equation}
\begin{aligned}
  \sum_{t=1}^T  \|\nabla_{\xx} f(\xx^t)\|_2^2 &\leq \frac{1}{\alpha} \Ee_{\gg^0,\ldots,\gg^t}[f(\xx^0) - f(\xx^{t+1})]  +  \frac{\mu \alpha T V}{2}\\
  &\leq \frac{\Delta^0}{\alpha} +\mu T \alpha V
\end{aligned}
\end{equation}
Set $\alpha = \sqrt{\frac{2 \Delta^0}{\mu TV}}$, and divide the inequality to $T$ in order to obtain the required inequality as
\begin{equation}
 \frac{1}{T}\sum_{t=1}^T  \|\nabla_{\xx} f(\xx^t)\|_2^2 \leq  \sqrt{\frac{8 \Delta^0 \mu V}{T}}.
\end{equation}
\end{proof}

\subsubsection{Preliminary Results on Smoothed Functions}
\label{sec:prelim_appendix}
First, we will show that the $\mu-$smoothness and $L-$ Lipschitness properties of $f$ applies to $\hat{f}$ and $\tilde{f}$.

\begin{equation}
\begin{aligned}
|\tilde{f}_\UU(\xx_1) - \tilde{f}_\UU(\xx_2)| &= |E_{\vv \in \Be^n} [f(\xx_1 + \delta \UU \vv)] - E_{\vv \in \Be^n} [f(\xx_2 + \delta \UU \vv)]| \\
&= |E_{\vv \in \Be^n} [f(\xx_1 + \delta \UU \vv) - f(\xx_2 + \delta \UU \vv)]| \\ &\leq |E_{\vv \in \Be_n} [ L \|\xx_1 - \xx_2\|_2 ]| \\ &\stackrel{(a)}{=} L \| \xx_1 - \xx_2\|_2
\end{aligned}
\label{eq:lip}
\end{equation}
where we use $L-$Lipschitz continuity of $f$ in (a), and,
\begin{equation}
\begin{aligned}
\| \nabla \tilde{f}_\UU(\xx_1) - \nabla \tilde{f}_\UU(\xx_2)\|_2 &=\| \nabla E_{\vv \in \Be^n} [ f(\xx_1 + \delta \UU \vv)] - \nabla E_{\vv \in \Be^n} [ f(\xx_2 + \delta \UU \vv)] \|_2
 \\
 &= \| E_{\vv \in \Be^n} [\nabla f(\xx_1 + \delta \UU \vv)] - E_{\vv \in \Be^n} [\nabla f(\xx_2 + \delta \UU \vv)] \|_2 \\
 & = \| E_{\vv \in \Be^n} [\nabla f(\xx_1 + \delta \UU \vv) - \nabla f(\xx_2 + \delta \UU \vv)] \|_2 \\
 &\stackrel{(b)}{\leq} E_{\vv \in \Be^n} [ \|\nabla f(\xx_1 + \delta \UU \vv) - \nabla f(\xx_2 + \delta \UU \vv) \|_2 ] \\
 &\stackrel{(c)}{\leq} \mu \| \xx_1 - \xx_2 \|_2
\end{aligned}
\label{eq:smooth}
\end{equation}
where we use Jensen's inequality and convexity of the norm in (b) and $\mu-$smoothness of $f$ in (c). Hence, $\mu-$smoothness and $L-$ Lipschitness applies to $\tilde{f}_\UU$ for any $\UU$. Take $n=d$ and $\UU=\II$, then the $\mu-$smoothness and $L-$ Lipschitness applies to $\hat{f}$.

Next, we will study the impact of using the gradients of the smoothed function instead of the original function.

\begin{equation}
\begin{aligned}
\frac{1}{T}\sum_{t=1}^T\|\nabla_{\xx} f(\xx)\|_2^2 &=  \frac{1}{T}\sum_{t=1}^T\|\nabla_x f(\xx) -\nabla_x \tilde{f}_\UU(\xx) + \nabla_{\xx} \tilde{f}_\UU(\xx)\|_2^2 \\
&\leq  \frac{2}{T} \sum_{t=1}^T\|\nabla_x f(\xx) -\nabla_{\xx} \tilde{f}_\UU(\xx)\|_2^2 + \frac{2}{T} \sum_{t=1}^T\|\nabla_{\xx} \tilde{f}_\UU(\xx)\|_2^2. \\
\end{aligned}
\label{eq:ftohat}
\end{equation}
where we use $\|a + b\|_2^2 \leq 2\|a\|_2^2   +2 \| b\|_2^2  $. We further bound the left term as
\begin{equation}
\begin{aligned}
\|\nabla_x f(\xx) -\nabla_{\xx} \tilde{f}_\UU(\xx)\|_2^2 &= \|\nabla_x f(\xx) -\nabla_{\xx} \Ee_{\vv \sim \Be_n}[f(\xx +\delta \UU \vv)]\|_2^2 \\
&\stackrel{(a)}{=} \| \Ee_{\vv \sim \Be_n} [\nabla_{\xx}f(\xx) - \nabla_{\xx}f(\xx +\delta \UU \vv)] \|_2^2 \\
&\stackrel{(b)}{\leq} \| \Ee_{\vv \sim \Be_n} [ \delta \mu \|\UU \vv \|_2 ] \|_2^2 \\
 &\stackrel{(c)}{\leq} \delta^2 \mu^2
 \end{aligned}
 \label{eq:ftohat2}
\end{equation}
using dominated convergence theorem in (a), the $\mu-$smoothness of $f$ in (b) and orthonormality of $\UU$ and the fact that norm of any point in a unit ball is bounded by $1$. By taking $n=d$ and $\UU=\II$, this result also implies the same for $\hat{f}$. Hence,

\begin{equation}
\begin{aligned}
\frac{1}{T}\sum_{t=1}^T\|\nabla_{\xx} f(\xx)\|_2^2  &\leq \frac{2}{T}\sum_{t=1}^T\|\nabla_{\xx} \hat{f}(\xx)\|_2^2  + 2\delta^2\mu^2 \\
\frac{1}{T}\sum_{t=1}^T\|\nabla_{\xx} f(\xx)\|_2^2  &\leq \frac{2}{T}\sum_{t=1}^T\|\nabla_{\xx} \tilde{f}_{\UU^t}(\xx)\|_2^2  + 2\delta^2\mu^2 \quad \forall_{\UU^1,\ldots,\UU^T}
\label{eq:smoothed_effect}
\end{aligned}
\end{equation}

\subsubsection{Second Moment of the Gradient Estimator}
\label{sec:estimator_variance}
We will start with studying the second moment of our gradient estimate for the manifold case. We bound the expected square norm of the gradient estimate as
\begin{equation}
\begin{aligned}
\Ee_{\sss \in \Se^{n}, \xi} \left[ \left\|\gg_m^t\right\|^2_2 \right] &= \Ee_{\sss \in \Se^{n}, \xi}\left[\left\| \frac{n}{2\delta} \left[ F(\xx^t + \delta \UU \sss, \xi_1) - F(\xx^t - \delta \UU \sss, \xi_2) \right] \UU \sss \right\|_2^2 \right]  \\
&\stackrel{(a)}{=} \frac{n^2}{4\delta^2} \Ee_{\sss \in \Se^n, \xi}\left[\left( F(\xx^t + \delta \UU \sss, \xi_1) - F(\xx - \delta \UU \sss,\xi_2) \right)^2 \right] \\
&\stackrel{(b)}{\leq}  \frac{n^2}{2\delta^2} \Ee_{\sss \in \Se^n} \left[\left(f(\xx+\delta \UU\sss)-  f(\xx-\delta \UU\sss)\right)^2 \right] \\
&\quad + \frac{n^2}{\delta^2} \Ee_{\sss \in \Se^n,\xi} \left[ \left(F(\xx + \delta \UU \sss, \xi) - f(\xx+\delta \UU \sss) \right)^2 \right] \\
&\quad + \frac{n^2}{\delta^2}  \Ee_{\sss \in \Se^n,\xi} \left[ \left(F(\xx - \delta \UU \sss, \xi) - f(\xx-\delta \UU \sss) \right)^2 \right] \\
&\stackrel{(c)}{\leq} 2n^2 L^2 + \frac{2 n^2 V_{F}}{\delta^2}
\end{aligned}
\label{eq:variance_m}
\end{equation}

where we use orthonormality of $\UU$ and unit norm property of $\sss$ in (a), add and substract \mbox{$f(\xx + \delta \UU \sss) - f(\xx - \delta \UU \sss)$} and use $(a+b)^2 \leq 2a^2 + 2b^2$ in (b), use the bounded variance of $F$ and the Lipschitz smoothness of $f$ in (c).

Second moment of the random search estimator can also computed similarly. And, the resulting bound would be
\begin{equation}
\Ee_{\sss \in \Se^{n}, \xi} \left[ \left\|\gg_e^t\right\|^2_2 \right] \leq 2d^2 L^2 + \frac{2 d^2 V_{F}}{\delta^2}
\label{eq:variance_e}
\end{equation}

\subsubsection{Proof of Proposition 1}
\label{sec:ars_proof}
\begin{proof}
Analysis of SGD from Lemma~\ref{lemma:sgd} shows that
\begin{equation}
 \frac{1}{T}\sum_{t=1}^T  \|\nabla_{\xx} f(\xx^t)\|_2^2 \leq  \sqrt{\frac{8 \Omega^0 \mu V}{T}}.
\end{equation}
Using the bound on the second moment of the estimator we derive in (\ref{eq:variance_e}),
\begin{equation}
 \frac{1}{T}\sum_{t=1}^T  \|\nabla_{\xx} f(\xx^t)\|_2^2 \leq  \sqrt{\frac{8 \Omega^0 \mu}{T}} \sqrt{2d^2 L^2 + \frac{2 d^2 V_{F}}{\delta^2}}
\end{equation}
 Using the relationship between $f$ and $\hat{f}$ we derive in (\ref{eq:smoothed_effect}),
 \begin{equation}
 \frac{1}{T}\sum_{t=1}^T\|\nabla_{\xx} \hat{f}(\xx)\|_2^2 \leq 2\delta^2\mu^2 +  2\sqrt{\frac{8 \Omega^0 \mu}{T}} \sqrt{2d^2 L^2 + \frac{2 d^2 V_{F}}{\delta^2}}. 
 \end{equation}
We use the property $\sqrt{a+b} \leq \sqrt{a} + \sqrt{b}$, and solve for $\alpha$. After we substituted the resulting $\delta$, 
\begin{equation}
\frac{1}{T}\sum_t \| \nabla_{\xx} f(\xx^t)\|_2^2 \leq  \frac{c_0 + c_1 d}{T^{\frac{1}{2}}} +  \frac{c_2 d^{\frac{2}{3}}}{T^{\frac{1}{3}}}
\end{equation}
where $c_0 = 4L\sqrt{2 \Omega^0 \mu}$, $c_1=\sqrt{2}c_0$, and $c_2=(4 V_F \Omega^0)^{\frac{1}{3}} (2 \mu + 4 \mu^{\frac{5}{6}})$.
 \end{proof}

\subsubsection{Proof of Proposition 2}
\label{sec:mrs_proof}
\begin{proof}
Sample complexity of the manifold random search follows closely the proof of Proposition~\ref{prop:rs}. We summarize here for the sake of completeness. Using the analysis of SGD from Lemma~\ref{lemma:sgd},
\begin{equation}
 \frac{1}{T}\sum_{t=1}^T  \|\nabla_{\xx} f(\xx^t)\|_2^2 \leq  \sqrt{\frac{8 \Omega^0 \mu V}{T}}.
\end{equation}
Using the bound on the second moment of the estimator we derive in (\ref{eq:variance_e}), and the relationship between $f$ and $\tilde{f}_{\UU^\star}$ we derive in (\ref{eq:smoothed_effect}),
 \begin{equation}
 \frac{1}{T}\sum_{t=1}^T\|\nabla_{\xx} \hat{f}(\xx)\|_2^2 \leq 2\delta^2\mu^2 +   2\sqrt{\frac{8 \Omega^0 \mu}{T}} \sqrt{2n^2 L^2 + \frac{2 n^2 V_{F}}{\delta^2}}. 
 \end{equation}
We first use the property $\sqrt{a+b} \leq \sqrt{a} + \sqrt{b}$, then solve for $\alpha$ and $\delta$. Finally, we substitute resulting $\delta$ to get the final result as;
\begin{equation}
\frac{1}{T}\sum_t \| \nabla_{\xx} f(\xx^t)\|_2^2 \leq  \frac{c_0 + c_1 n}{T^{\frac{1}{2}}} +  \frac{c_2 n^{\frac{2}{3}}}{T^{\frac{1}{3}}}
\end{equation}

where $c_0 = 4L\sqrt{2 \Omega^0 \mu}$, $c_1=\sqrt{2}c_0$, and $c_2=(4 V_F \Omega^0)^{\frac{1}{3}} (2 \mu + 4 \mu^{\frac{5}{6}})$.

\end{proof}

\subsection{Proof of the Theorem 1}
\label{sec:proof}
\begin{proof}
We will prove our main theorem using a three major arguments. First, we analyze the SGD of a non-convex function with biased gradients in section~\ref{sec:proof_sgd_with_bias}. Second, we show that the expected value of our loss function is equal to the bias term in section~\ref{sec:proof_freedman}. In order to bound the difference between the empirical loss function we minimize and its expectation, we use the Freedman's inequality \cite{Freedman1975}. Third, we bound the empirical loss in section~\ref{sec:proof_empirical_analysis} in terms of the distance travelled by the iterates of the optimization $\|x^{t+1} - x^{t}\|_2$. Finally, we optimize the resulting bound in terms of the finite difference step ($\delta^t$), SGD step size ($\alpha^t$), and mixing coefficients ($\beta$) to obtain the final statement of the theorem in section \ref{sec:proof_final}.

\subsubsection{Analysis of SGD with bias}
\label{sec:proof_sgd_with_bias}
In order to analyze the SGD with bias, we will denote the gradient at the iteration $t$, as $\gg^t$. Moreover, we will assume that its bias is $\bb^t$ as $\Ee[\gg^t] = \bb^t + \nabla_{\xx}\tilde{F}(\xx, \xi)$.  Using the $\mu-$smoothness of the function $\tilde{F}$, we can state that
\begin{equation}
\tilde{F}(\xx^{t+1}, \xi) = \tilde{F}(\xi^{t}-\alpha \gg^{t}, \xi) \leq \tilde{F}(\xx^t, \xi) - \alpha \nabla_{\xx} \tilde{F}(\xx^t, \xi) \gg^t + \frac{\mu \alpha^2}{2} \| \gg^t\|_2^2.
\end{equation}
Let's assume the effect of the bias is bounded as $|\nabla_{\xx}\tilde{F}(\xx^t, \xi)^\intercal \bb^t| \leq B^t$, then 
\begin{equation}
\tilde{F}(\xx^{t+1}, \xi) \leq \tilde{F}(\xx^t, \xi) - \alpha \nabla_{\xx} \tilde{F}(\xx^t, \xi) [\gg^t - \bb^t] + \frac{\mu \alpha^2}{2} \| \gg^t\|_2^2 + \alpha B^t.
\end{equation}
Taking the expectation with respect to $\sss$ and $\xi$, we get
\begin{equation}
\alpha \| \nabla_{\xx} \tilde{f}(\xx^t) \|^2_2 \leq \Ee_{\sss,\xi}[\tilde{f}(\xx^{t+1})] - \tilde{f}(\xx^t)  + \frac{\mu \alpha^2 \VV_{\gg}}{2} + \alpha B^t
\end{equation}
where $\VV_{\gg}=\Ee[\|\gg^t\|_2^2]$. Summing up from $t=1$ to $T$ and dividing by $\alpha$, we obtain
\begin{equation}
\sum_{t=1}^T \| \nabla_{\xx} \tilde{f}(\xx^t) \|^2_2 \leq \frac{\Omega}{\alpha}  +  \frac{\mu \alpha T \VV_{\gg}}{2} + \sum_{t=1}^T B^t.
\label{eq:almost_sgd}
\end{equation}

We now compute the bound on the bias term ($B^t$) using Lemma~\ref{lemma:manifold_stoke} and  $\gg^t = \beta \gg_e^t + (1-\beta) \gg_m^t$,
\begin{equation}
\begin{aligned}
\nabla_{\xx}\tilde{F}(\xx^t,\xi)^\intercal \bb^t &= (1-\beta) \Ee_{\sss \in \Se^{n-1}} \left[F(\xx + \delta \UU \sss,\xi) \nabla_{\xx} \tilde{F}(\xx^t,\xi)^\intercal [\UU - \UU^\star] \sss \right] \\
&\leq (1-\beta) \Omega \Ee_{\sss \in \Se^{n-1}} \left[ | \nabla_{\xx} \tilde{F}(\xx^t,\xi)^\intercal [\UU - \UU^\star] \sss | \right] \\
\end{aligned}
\end{equation}
where we used the fact that function is bounded as $F(\xx,\xi) \leq \Omega$.  Since $\nabla_{\xx}\tilde{F}(\xx,\xi)$ lies in the column space of $\UU^\star$, for some $\pp$,
\begin{equation}
\pp^\intercal {\UU^\star}^\intercal [\UU - \UU^\star] = \pp^\intercal [{\UU^\star}^\intercal \UU - \II] = \pp^\intercal [{\UU^\star}^\intercal \UU - \UU^\intercal \UU] =  [\nabla_{\xx}\tilde{F}(\xx,\xi)^\intercal - \pp^\intercal\UU^\intercal] \UU \\
\end{equation}

Choice of bases ($\UU^\star$) is not unique in Lemma~1. It can be any orthonormal basis spanning the tangent space of the manifold. In order to minimize $\Ee_{\sss \in \Se^{n-1}} \left[ | \nabla_{\xx} \tilde{F}(\xx^t,\xi)^\intercal [\UU - \UU^\star] \sss | \right]$, choose the basis which will set $\UU \pp$ to the projection of $\nabla_{\xx} \tilde{F}(\xx^t,\xi)$ into the column space of $\UU$. Then,
\begin{equation}
\| \nabla_{\xx} \tilde{F}(\xx^t,\xi)^\intercal [\UU - \UU^\star] \|_2 \leq  \min_{\qq \in \Re^n} \| \nabla_{\xx} \tilde{F}(\xx^t,\xi)  - \UU \qq \|_2 \leq  \| \nabla_{\xx} \tilde{F}(\xx^t, \xi) - \nabla_{\xx} g(r(\xx^t;\theta^t);\psi^t)\|_2
\end{equation}
where last inequality is due to $\nabla_{\xx}  g(r(\xx^t;\theta^t);\psi^t)$ being in the column space of $\UU$. Combining with (\ref{eq:almost_sgd}), (\ref{eq:smoothed_effect}), and $0 \leq \beta \leq 1$;
\begin{equation}
\frac{1}{T}\sum_{t=1}^T \| \nabla_{\xx} f(\xx^t) \|_2^2 \leq \frac{\Omega}{\alpha T}  +  \frac{\mu \alpha \VV_{\gg}}{2} + \frac{\Omega}{T} \sum_{t=1}^T \| \nabla_{\xx} \tilde{F}(\xx^t, \xi) - \nabla_{\xx} g(r(\xx^t;\theta^t);\psi^t)\|_2 + \delta^2 \mu^2.
\label{eq:sgd_with_grad}
\end{equation}

We proceed to bound $\sum_{t=1}^T \| \nabla_{\xx} \tilde{F}(\xx^t, \xi) - \nabla_{\xx} g(r(\xx^t;\theta^t);\psi^t)\|_2 $ in the next section.

\subsubsection{Role of the Bandit Feedback}
\label{sec:proof_freedman}
The true loss we are interested in is the effect of bias on the SGD. The bias is the sum of the differences between the gradients of the true function ($\tilde{F}(\xx, \xi)$) and the estimated one ($g(r(\xx;\theta);\psi)$) as derived in (\ref{eq:sgd_with_grad}). On the other hand, the empirical information we have is the projection of this loss to a random direction ($\sss$) with an additional noise term. In this section, we will analyze the difference between the bias and the empirical loss without the noise. We will include the discussion on the noise in section~\ref{sec:proof_empirical_analysis}. 

First, we will show that expectation of the empirical loss over a direction uniformly chosen from a unit sphere is the bias term. In order to show this, we need an elementary result which is $\Ee_{\sss \in \Se^{d-1}}[(\sss^t\vv)^2]=\frac{\|\vv\|^2}{d}$. We show this result in Section~\ref{sec:expecation}. Using this result,
\begin{equation}
 \Ee_{\sss \in \Se^{d-1}}\Big[\Big(\sss^\intercal\big(\nabla_{\xx}\tilde{F}(\xx,\xi) - \nabla_{\xx}g((r(\xx;\theta);\psi)\big)\Big)^2\Big] = \frac{1}{d}\big\|\nabla_{\xx}\tilde{F}(\xx,\xi) - \nabla_{\xx}g((r(\xx;\theta);\psi)\big\|_2^2
 \label{eq:delta_exp}
\end{equation}
We introduce the notation $\Delta(\sss, \xx, \xi, \theta,\psi) = \Big(\sss^\intercal\big(\nabla_{\xx}\tilde{F}(\xx,\xi) - \nabla_{\xx}g((r(\xx;\theta);\psi)\big)\Big)^2$ for clarity and, proceed to bound the difference $\big|\Delta(\sss, \xx, \xi, \theta,\psi) - \Ee_{\sss}[\Delta(\sss, \xx, \xi, \theta,\psi)]\big|$. Consider the sequence of differences as,
\begin{equation}
\ZZ^t = \Delta(\sss^t, \xx^t, \xi^t, \theta^t,\psi^t) - \frac{1}{d} \big\|\nabla_{\xx}\tilde{F}(\xx^t,\xi^t) - \nabla_{\xx}g((r(\xx^t;\theta^t);\psi^t)\big\|_2^2,
\end{equation}
it is clear that $\Ee[\ZZ^t]=0$ for all $t$. Moreover, the differences are bounded due to the Lipschitz continuity. Hence, $\ZZ^t$ is a martingale difference sequence. We use the Freedman's inequality \citep{Freedman1975} in order to bound $\sum \ZZ^t$, similar to the seminal work studying the generalization of online learning by \citet{Kakade2009}. Freedman's inequality \citep{Freedman1975} states that if $\ZZ^t$ is a martingale difference sequence,
\begin{equation}
P\left(\sum_{t=1}^T \ZZ^t \geq \max \left\{2\sqrt{\sum_{i=1}^T Var(\ZZ^t)}, 3b\sqrt{\ln\left(\nicefrac{1}{\gamma}\right)}\right\} \sqrt{\ln\left(\nicefrac{1}{\gamma}\right)} \right) \leq 4\gamma \ln(T)
\end{equation}
where $b$ is the bound on $\ZZ^t$ as $|\ZZ^t| \leq b$ for all $t$. Before we substitute the $\ZZ^t$ in the Freedman's inequality, we need to compute the variance of the $\ZZ^t$. We bound the variance using the definition of the variance as
{\small
\begin{equation}
\begin{aligned}
&\Ee_{\sss \in \Se^{d-1}}\bigg[\bigg(\Big({\sss^t}^\intercal\big(\nabla_{\xx}\tilde{F}(\xx^t,\xi^t) - \nabla_{\xx}g((r(\xx^t;\theta^t);\psi^t)\big)\Big)^2 - \frac{1}{d}\big\|\nabla_{\xx}\tilde{F}(\xx^t,\xi^t) - \nabla_{\xx}g((r(\xx^t;\theta^t);\psi^t)\big\|^2 \bigg)^2 \bigg] \\
&\quad \leq \Ee_{\sss \in \Se^{d-1}}\bigg[\bigg( \frac{1}{d}\big(\nabla_{\xx}\tilde{F}(\xx^t,\xi^t) - \nabla_{\xx}g((r(\xx^t;\theta^t);\psi^t)\big)^\intercal \big(d{\sss^t} - \nabla_{\xx}\tilde{F}(\xx^t,\xi^t) + \nabla_{\xx}g((r(\xx^t;\theta^t);\psi^t)\big) \bigg)^2\bigg] \\
&\quad \stackrel{(a)}{\leq}   \frac{\|\nabla_{\xx}\tilde{F}(\xx^t,\xi^t) - \nabla_{\xx}g((r(\xx^t;\theta^t);\psi^t) \big\|_2^2 }{d^2} \Ee_{\sss \in \Se^{d-1}}\big[\big( \nu^\intercal \big(d{\sss^t} - \nabla_{\xx}\tilde{F}(\xx^t,\xi^t) + \nabla_{\xx}g((r(\xx^t;\theta^t);\psi^t)\big) \big)^2\big] \\
&\quad \stackrel{(b)}{\leq} \frac{1}{d^2} \|\nabla_{\xx}\tilde{F}(\xx^t,\xi^t) - \nabla_{\xx}g((r(\xx^t;\theta^t);\psi^t) \big\|_2^2 \big( d^2 \Ee_{\sss \in \Se^{d-1}}[(\nu^\intercal \sss^t)^2] + 2L^2) \\
&\quad = \bigg(\frac{2L^2 + d}{d^2}\bigg) \|\nabla_{\xx}\tilde{F}(\xx^t,\xi^t) - \nabla_{\xx}g((r(\xx^t;\theta^t);\psi^t) \big\|_2^2
\end{aligned}
\end{equation}}
where we denote the unit vector in the direction of $\nabla_{\xx}\tilde{F}(\xx^t,\xi^t) - \nabla_{\xx}g((r(\xx^t;\theta^t);\psi^t)$ as $\nu$ in (a) and use the Lipschitz property as well as (\ref{eq:expectation}) in (b). Finally we substitute this result in Freedman's inequality and use the expectation $\Delta(\sss^t, \xx^t, \xi^t, \theta^t,\psi^t)$ from (\ref{eq:delta_exp}). We also introduce the shorthand notation $\Delta^t = \Delta(\sss^t, \xx^t, \xi^t, \theta^t,\psi^)$. With probability at least $1 - 4\gamma \ln(T)$,
\begin{equation}
\sum_{t=1}^T \Ee [\Delta^t]  \leq \sum_{t=1}^T \Delta^t  + \max\left\{ 2 \sqrt{\frac{2L^2 + d}{d}} \sqrt{\sum_{t=q}^T \Ee[\Delta^t]}, 6L^2\left(\frac{1+d}{d}\right)\sqrt{\ln(\nicefrac{1}{\gamma})}\right\}\sqrt{\ln(\nicefrac{1}{\gamma})}
\label{eq:freed}
\end{equation}

We can further bound $\sum_{t=1}^T \Ee[ \Delta^t ]$ by using the fact that (\ref{eq:freed}) is in the form of \mbox{$s^2 \leq r + \max\{2as,6bc\}c$} which can be solved for $s$ using quadratic formula. We solve this quadratic in Section~\ref{sec:quadratic}, and show that it implies $s  \leq r +  2ac\sqrt{r} + \max\{4 a^2,  6b\}c^2$. Using the solution of the quadratic formula, we get the following final result describing the effect of bandit feedback. With probability $1 - 4\gamma \ln(T)$,
\begin{equation}
 \sum_{t=1}^T \Ee [\Delta^t] \leq \sum_{t=1}^T \Delta^t  + 2\sqrt{\frac{2L^2 + d}{d}} \sqrt{\sum_{t=1}^T \Delta^t} \sqrt{\ln(\nicefrac{1}{\gamma})} +\max\left\{ \frac{8L^2 + 4d}{d}, 6L^2\left(\frac{1+d}{d}\right)\right\}\ln(\nicefrac{1}{\gamma})
 \label{eq:bandit_result}
\end{equation}

In summary, we bound the difference between the effect of the bias $\sum_{t=1}^T \Ee[\Delta^t]$ and the empirical loss we minimize $\sum_{t=1}^T \Delta^t$ without the noise term. In the next section, we procede to bound the empirical loss $\sum_{t=1}^T\mathcal{L}^t$ and the effect of the noise term $\sum_{t=1}^T|\Delta^t - \mathcal{L}^t|$.

\subsubsection{Analysis of the Empirical Loss}
\label{sec:proof_empirical_analysis}
In this section, we will analyze the empirical loss. Our analysis is similar to the regret analysis of Follow the Regularized Leader (FTRL). However, we do not get an adverserial bound. Our resulting bound is the function of distances of iterates denoted as $\|\xx^{t} - \xx^{t-1}\|_2$. Such a bound would not be useful in adverserial setting since adversary chooses the iterates. However, we also design the optimization method. Hence, we bound $\|\xx^{t} - \xx^{t-1}\|_2$ by setting step sizes accordingly.

We start our analysis with bounding the total empirical loss in terms of the length of the trajectory the learner takes. As a consequence of the FTL-BTL Lemma \citep{Kalai05}, 
\begin{equation}
 \sum_{t=1}^T \mathcal{L}(\sss^t, \xx^t, \xi^t, \theta^t,\psi^t) \leq \sum_{t=1}^T \left[\mathcal{L}(\sss^t, \xx^t, \xi^t, \theta^t,\psi^t) - \mathcal{L}(\sss^t, \xx^t, \xi^t, \theta^{t+1},\psi^{t+1}) \right] + \mathcal{R}(\theta^{T},\psi^{T}).
\end{equation}
We use the Lipschitz smoothness property to convert this  into  distance travelled by the learner as
{\small
\begin{equation}
\begin{aligned}
\mathcal{L}(&\sss^t, \xx^t, \xi^t, \theta^t,\psi^t) - \mathcal{L}(\sss^t, \xx^t, \xi^t, \theta^{t+1},\psi^{t+1}) \\
 \quad &=  \Bigg( \frac{y(\xx^t,\sss^t,\xi^t)}{2\delta} -  {\sss^t}^\intercal\nabla_{\xx} g(r(\xx^t;\theta^t);\psi^t)\Bigg)^2
-  \Bigg(\frac{y(\xx^t,\sss^t,\xi^t)}{2\delta} - {\sss^t}^\intercal\nabla_{\xx} g(r(\xx^t;\theta^{t+1});\psi^{t+1})\Bigg)^2 \\
&\leq 4L \sum_{t=1}^T   {\sss^t}^\intercal (\nabla_{\xx} g(r(\xx^t;\theta^t);\psi^t) -   \nabla_{\xx} g(r(\xx^t;\theta^{t+1});\psi^{t+1})) \\
&\leq 4L \underbrace{\sum_{t=1}^T   \| \nabla_{\xx} g(r(\xx^t;\theta^t);\psi^t) -   \nabla_{\xx} g(r(\xx^t;\theta^{t+1});\psi^{t+1})\|_2}_{\textsc{LearnerPathLength}}
\end{aligned}
\end{equation}}

With properly chosen $\lambda$, our regularizer enforces the smallest possible update, in terms of learner path length, which is consistent with the current sampled directions. This is due to the representability assumption which guarantees that manifold can be fit perfectly using the parametric family. Hence, there is a solution with $\mathcal{L}=0$. Considering the regularizer is the learner path length, with proper choice of $\lambda$, the FTRL will choose the shortest learner path length.

Among all choices of $\sss^t$, $\sss^t = \nicefrac{\nabla_{\xx}F(\xx^t,\xi^t)}{\|\nabla_{\xx}F(\xx^t,\xi^t)\|}$ would result in the longest distance. Hence, we can bound the learner path distance of our empirical problem with the distances of this oracle problem. We denote the $\theta$ and $\psi$ found by this oracle problem as $\hat{\theta},\hat{\psi}$. Formally, this upper bound leads to
{\small
\begin{equation}
\begin{aligned}
\textsc{LearnerPathLength} &\leq \sum_{t=1}^T  \|\nabla_{\xx} g(r(\xx^t;\hat{\theta}^t);\hat{\psi}^t) -  \nabla_{\xx} g(r(\xx^t;\hat{\theta}^{t+1});\hat{\psi}^{t+1})\|_2  \\
& \leq \sum_{t=1}^T \|\nabla_{\xx} g(r(\xx^t;\hat{\theta}^t);\hat{\psi}^t) -  \nabla_{\xx} g(r(\xx^{t-1};\hat{\theta}^{t+1});\hat{\psi}^{t+1})\|_2
\\ &\quad \quad + \sum_{t=1}^T  \|\nabla_{\xx} g(r(\xx^{t-1};\hat{\theta}^{t+1});\hat{\psi}^{t+1}) -  \nabla_{\xx} g(r(\xx^t;\hat{\theta}^{t+1});\hat{\psi}^{t+1})\|_2 \\
& \stackrel{(a)}{\leq} \sum_{t=1}^T \|\nabla_{\xx} g(r(\xx^t;\hat{\theta}^t);\hat{\psi}^t) -  \nabla_{\xx} g(r(\xx^{t-1};\hat{\theta}^{t});\hat{\psi}^{t})\|_2
\\ &\quad \quad + \sum_{t=1}^T  \|\nabla_{\xx} g(r(\xx^{t-1};\hat{\theta}^{t+1});\hat{\psi}^{t+1}) -  \nabla_{\xx} g(r(\xx^t;\hat{\theta}^{t+1});\hat{\psi}^{t+1})\|_2 \\
&\stackrel{(b)}{\leq} 2\mu \sum_{t=1}^T  \|\xx^{t} - \xx^{t-1}\|_2,
\end{aligned}
\end{equation}}
as the consequence of the fact that oracle problem solves all gradients perfectly (\ie \mbox{$\nabla_{\xx}\tilde{F}(\xx^t,\xi^t) = \nabla_{\xx} g(r(\xx;\theta^i);\psi^i)$} for all $i > t$) in $(a)$, and the functions are $\mu-$smooth in $(b)$. Using the fact that gradient norms are bounded,
\begin{equation}
\sum_{t=1}^T \mathcal{L}(\sss^t, \xx^t, \xi^t, \theta^t,\psi^t)  \leq 8\mu L\sum_{t=1}^T  \|\xx^{t} - \xx^{t-1}\|_2 + 2L.
\label{eq:lbound}
\end{equation}

In order to extend this result to the $\Delta^t$, we use the smoothness of the function as
\begin{equation}
\begin{aligned}
 \Delta^t &= \big( {\sss^t}^\intercal [\nabla_{\xx}\hat{F}(\xx^t,\xi) - \nabla_{\xx} g(r(\xx^t;\theta^t);\psi^t)] \big)^2 \\
&\leq  \Bigg({\sss^t}^\intercal \nabla_{\xx}\hat{F}(\xx^t,\xi) - \frac{y(\xx^t,\sss^t,\xi^t)}{2\delta}\Bigg)^2 + \Bigg(\frac{y(\xx^t,\sss^t,\xi^t)}{2\delta} -  {\sss^t}^\intercal \nabla_{\xx} g(r(\xx^t;\theta^t);\psi^t)\Bigg)^2 \\
&\leq  \Bigg(\Ee_{\vv \in \Be^{d}} \Bigg[ {\sss^t}^\intercal \nabla_{\xx}F(\xx^t + \delta \vv,\xi) - \frac{y(\xx^t,\sss^t,\xi^t)}{2\delta}\Bigg]\Bigg)^2 + \mathcal{L}(\sss^t, \xx^t, \xi^t, \theta^t,\psi^t) \\
&\leq \mu^2\delta^2 + \mathcal{L}(\sss^t, \xx^t, \xi^t, \theta^t,\psi^t)
\label{eq:delta_to_L}
\end{aligned}
\end{equation}

By combining (\ref{eq:delta_to_L}) and (\ref{eq:lbound}), we state
\begin{equation}
\sum_{t=1} \Delta^t \leq 8\mu L \sum_{t=1}^T \| \xx^t - \xx^{t-1}\|_2 + \mu^2\delta^2 T + 2L.
\label{eq:empirical_final}
\end{equation}

\subsubsection{Proof of the Theorem}
\label{sec:proof_final}
We combine the aforementioned three arguments to state the final sample complexity of our method. Our analysis of SGD with bias from (\ref{eq:sgd_with_grad}) combined with the definition of the $\Delta^t$ gives the following bound on the sample complexity.
\begin{equation}
\frac{1}{T}\sum_{t=1}^T \| \nabla_{\xx} f(\xx^t) \|_2^2 \leq \frac{\Omega}{\alpha T}  +  \frac{\mu \alpha \VV_{\gg}}{2} + \frac{\Omega}{T} \sum_{t=1}^T \sqrt{d\Ee[\Delta^t]} + \delta^2 \mu^2.
\end{equation}
Using the concavity of the square root function with Jensen's inequality, we can convert this bound to
\begin{equation}
\frac{1}{T}\sum_{t=1}^T \| \nabla_{\xx} f(\xx^t) \|_2^2 \leq \frac{\Omega}{\alpha T}  +  \frac{\mu \alpha \VV_{\gg}}{2} + \Omega \sqrt{d} \sqrt{\frac{1}{T} \sum_{t=1}^T \Ee[\Delta^t]} + \delta^2 \mu^2.
\label{eq:all_except_e}
\end{equation}

Next, we will bound $\sqrt{\frac{1}{T} \sum_{t=1}^T \Ee[\Delta^t]}$ using (\ref{eq:bandit_result}). For simplicity, we will analyze two cases $(\sum_{t=1}^T \Delta^t \leq 1)$ and  $(\sum_{t=1}^T \Delta^t > 1)$ seperately.

\noindent \textbf{Case 1, $\sum_{t=1}^T \Delta^t \leq 1$:} We substitute this bound directly in (\ref{eq:bandit_result}). With probability \mbox{$1 - 4\gamma \ln(T)$},
\begin{equation}
\frac{1}{T} \sum_{t=1}^T \Ee [\Delta^t] \leq \frac{1}{T}  + \frac{2}{T}\sqrt{\frac{2L^2 + d}{d}} \sqrt{\ln(\nicefrac{1}{\gamma})} +\max\left\{ \frac{8L^2 + 4d}{dT}, 6L^2\left(\frac{1+d}{dT}\right)\right\}\ln(\nicefrac{1}{\gamma})
\end{equation}
Relaxing the upper bound with the fact that dimension is greater than 1,
\begin{equation}
\sqrt{\frac{1}{T} \sum_{t=1}^T \Ee [\Delta^t]} \leq \sqrt{\frac{c_1}{T}}
\end{equation}
where \mbox{$c_1 = 1  + \sqrt{(8L^2 + 4)\ln(\nicefrac{1}{\gamma})} +\max\{8L^2 + 4, 12L^2\}\ln(\nicefrac{1}{\gamma})$}.

\noindent \textbf{Case 2, $\frac{1}{T}\sum_{t=1}^T \Delta^t > 1$:} Using the fact that $\sqrt{x}<x$ for $x>1$ and $\sqrt{x+y}\leq \sqrt{x}+\sqrt{y}$ as well as $d \geq 1$, we can state that with probability $1 - 4\gamma \ln(T)$,
\begin{equation}
\sqrt{\frac{1}{T} \sum_{t=1}^T \Ee [\Delta^t] }\leq \sqrt{\frac{c_2}{T}} + c_3 \sqrt{\frac{1}{T}\sum_{t=1}^T \Delta^t}
\end{equation}
where $\quad c_2 = \max\{8L^2+4, 12L^2\} \ln(\nicefrac{1}{\gamma})$ and $c_3 = 1 + \sqrt{2\sqrt{(2L^2 + 1)\ln(\nicefrac{1}{\gamma})}}$. Combining this with the bound (\ref{eq:empirical_final}) and $\sqrt{x} \leq x$ for $x>1$,
\begin{equation}
\sqrt{\frac{1}{T} \sum_{t=1}^T \Ee [\Delta^t] }\leq  \sqrt{\frac{c_2}{T}} + \frac{2c_3L}{T} + c_3 (\mu^2\delta^2 + 8\mu \alpha \VV_{\gg})
\end{equation}

We combine two cases and substitute it in (\ref{eq:all_except_e}). The final sample complexity is,
\begin{equation}
\frac{1}{T}\sum_{t=1}^T \| \nabla_{\xx} f(\xx^t) \|^2 \leq \frac{\Omega}{\alpha T}  + 
\mu \alpha \VV_{\gg} \left(\frac{1}{2}+8c_3 \Omega \sqrt{d}\right) + \mu^2 \delta^2 \big(1 + \Omega c_3\sqrt{d} \big) + \frac{2c_3 \Omega L\sqrt{d}}{T} + \sqrt{\frac{c_{1,2}d}{T}}.
\end{equation}
where $c_{1,2}=\sqrt{\Omega} \max\{c_1,c_2\}$. We minimize with respect to $\alpha^t$ and substitute it.
\begin{equation}
\frac{1}{T}\sum_{t=1}^T \| \nabla_{\xx} f(\xx^t) \|^2 \leq \sqrt{\frac{2 \mu \VV_{\gg} \Omega}{T}} \left(1 + \sqrt{4c_3 \Omega d}\right) + \mu^2 \delta^2 \big(1 + \Omega c_3\sqrt{d} \big) + \frac{2c_3 \Omega L\sqrt{d}}{T} + \sqrt{\frac{c_{1,2}d}{T}}.
\end{equation}

Before we solve for $\delta$, we bound $\VV_{\gg}$ by choosing $\beta=\nicefrac{1}{d}$ as,
\begin{equation}
\Ee[\VV_{\gg}] \leq \Ee\left[\left(\frac{1}{d} \gg_e + \left(1 - \frac{1}{d}\right) \gg_m\right)\right] \leq 4L^2n^2 + \frac{4n^2 V_F}{\delta^2}.
\end{equation}

Next, we solve $\delta$ to obtain the statement of the theorem. With probability \mbox{$1 - 4\gamma \ln(T)$},

\begin{equation}
\frac{1}{T}\sum_{t=1}^T \| \nabla_{\xx} f(\xx^t) \|^2 \leq \frac{k_1 d^{\frac{1}{2}}}{T} + \frac{k_2 d^{\frac{1}{2}}+k_3n + k_4 nd^{\frac{1}{2}}}{T^{\frac{1}{2}}} +
\frac{k_5 n^{\frac{2}{3}} + k_6 d^{\frac{1}{2}} n^{\frac{2}{3}}}{T^{\frac{1}{3}}}
\end{equation}

where $k_1 = 2 c_3 \Omega L$, $k_2=\sqrt{c_{1,2}}$, $k_3=2L\sqrt{2\mu\Omega}$, $k_4=4L\Omega \sqrt{\mu c_3}$, $k_5=3(2\Omega V_F)^{\nicefrac{1}{3}}$, and $k_6=k_5(3\Omega c_3)$.
\end{proof}

\subsection{Useful Elementary Results}
\subsubsection{Expectation of $(\sss^\intercal \vv)^2$ when $\sss$ is chosen uniformly from $\Se^{d-1}$}
\label{sec:expecation}

Consider $\int F(\OO \ee) \mu(\OO)$ where $\int [\cdot] \mu(\OO)$ is an integral over orthogonal matrices with Haar measure. If $\ee$ is a unit vector, we can show that
\begin{equation}
\Ee_{\sss \in \Se^{d-1}} [F(\sss)] = \int  F(\OO \ee) \mu(\OO).
\label{eq:haar}
\end{equation}

Before we use this result, we define $\|\vv\|^2$ as an integral over orthogonal matrices  $\OO$. Using orthogonality and cyclic property of the trace,
\begin{equation}
\vv^\intercal \vv=\text{Tr}(\vv \vv^\intercal) = \int \text{Tr}(\vv \vv^\intercal)  \mu(\OO) =   \int \text{Tr}\left( \OO \OO^\intercal \vv \vv^\intercal\right)  \mu(\OO) =  \int \text{Tr}\left(  \OO^\intercal \vv \vv^\intercal \OO\right)  \mu(\OO).
\end{equation}
Since the indentity matrix is sum of outer products of one hot vectors $\ee_i$ as $\II = \sum_{i} \ee_i \ee_i^\intercal$,
\begin{equation}
\int \text{Tr}\left(  \OO^\intercal \vv \vv^\intercal \OO\right)  \mu(\OO) = \int \text{Tr}\left(\sum_{i} \ee_i \ee_i^\intercal  \OO^\intercal \vv \vv^\intercal \OO\right)  \mu(\OO) =\sum_{i} \int \text{Tr}\left(  \ee_i^\intercal  \OO^\intercal \vv \vv^\intercal \OO \ee_i\right)  \mu(\OO).
\end{equation}
Using (\ref{eq:haar}), we can further show
\begin{equation}
\sum_{i} \int \text{Tr}\left(  \ee_i^\intercal  \OO^\intercal \vv \vv^\intercal \OO \ee_i\right)  \mu(\OO) = \sum_{i} \Ee_{\sss \in \Se^{d-1}} [\text{Tr}(\sss^\intercal \vv \vv^\intercal \sss)] = d \Ee_{\sss \in \Se^{d-1}} [(\sss^\intercal \vv)]
\end{equation}
Hence, combining all,
\begin{equation}
\Ee_{\sss \in \Se^{d-1}} [(\sss^\intercal \vv)] = \frac{\|\vv\|^2}{d}.
\label{eq:expectation}
\end{equation}

\subsubsection{Bounding the quadratic form}
\label{sec:quadratic}
We need to bound $s$ when the following quadratic inequality is correct;
\begin{equation}
s \leq r + \max\{2a\sqrt{s}, 6bc \}c.
\end{equation}

We first consider the max as two separate options. Given the original inequality, one of the following is correct;
\begin{equation}
s \leq r + 2ac\sqrt{s}, \quad \quad or \quad \quad s \leq r + 6bc^2.
\end{equation}

For the first case, $(\sqrt{s})^2 - 2ac \sqrt{s} -r \leq 0$. Using the quadratic formula, $\sqrt{s}$ is smaller than the largest root as \mbox{$\sqrt{s} \leq ac + \sqrt{a^2c^2 + r}$}. Hence,
\begin{equation}
s = (\sqrt{s})^2 \leq (ac + \sqrt{a^2c^2 + r})^2 = a^2c^2 + a^2c^2 + r + 2ac \sqrt{a^2c^2 + r} \leq 4 a^2c^2 + 2ac\sqrt{r} + r
\end{equation}

Combining the resulting bound with the other option gives the final bound as,
\begin{equation}
s  \leq r +  2ac\sqrt{r} + \max\{4 a^2,  6b\}c^2
\end{equation}


\section{Additional Implementation Details}
\label{sec:additional_implementation_details}
In this section, we provide additional details for the implementation. One key element of our method is the parametric family we use to learn the manifold. We consider a multi-layered perceptron with one to three hidden layers as Linear$(d,2n)\rightarrow\text{ReLU}\rightarrow\text{Linear}(2n,n)\rightarrow\text{ReLU}\rightarrow\text{Linear}(n,n)$ for $d<1000$ and $\text{Linear}(d,\nicefrac{1}{2}d)\rightarrow\text{ReLU}\rightarrow\text{Linear}(\nicefrac{1}{2}d,2n)\rightarrow\text{ReLU}\rightarrow\text{Linear}(2n,n)\rightarrow\text{ReLU}\rightarrow\text{Linear}(n,n)$ for $d>1000$. We set the dimensionality of the manifold as the number of directions $k$ which is a hyper parameter. We use grid search over $\delta$ and $n=k$ values and choose the best performing one in all experiments. Moreover, we also reinitialize the manifold parameters whenever the estimated gradient's magnitude is less than $1\mathrm{e}{-6}$. We perform online gradient descent to learn the model parameters using SGD with momentum as $0.9$. We also perform grid search for learning rate over $\{1\mathrm{e}{-4},1\mathrm{e}{-3},1\mathrm{e}{-2}\}$. We set $\lambda = 10^3$ for all experiments.

We further discuss experiment-specific details below.

\mypara{MuJoCo experiments.} We use linear policies and initialize them as zeros, which corresponds to no action. We use v$2-$t algorithm from \citet{Mania2018}, which includes whitening of the observation space and using top-k directions instead of all. We use grid search over the parameter space described by \citet{Mania2018} for $n=k$, $\alpha$ and $\delta$.

\mypara{Low-dimensional unconstrained optimization suite.} We use the following functions: \emph{sphere}, \emph{noisysphere, cigar, tablet, cigtab, cigtab2, elli, rosen, rosen\_chained, diffpow, rosenelli, ridge, ridgecircle, happycat, branin, goldsteinprice, rastrigin, schaffer, schwefel2\_22, lincon, rosen\_nesterov, styblinski\_tang, bukin} with dimensions $d=10$ and $d=100$ resulting in total 46 problems. We initialize all solutions with zero mean unit variance normal variables and use grid search over \mbox{$\delta \in \{1\mathrm{e}{-4},1\mathrm{e}{-3},1\mathrm{e}{-2},1\mathrm{e}{-1}\}$}, $k \in \{2,5,10,50\}$, and $\alpha \in \{1\mathrm{e}{-4},1\mathrm{e}{-3},1\mathrm{e}{-2},1\mathrm{e}{-1}\}$.

\mypara{Airfoil optimization.} We initialize the parameters of the manifold with zero-mean unit-variance normal variables. We use grid search over $\delta \in \{1\mathrm{e}{-4},1\mathrm{e}{-3},1\mathrm{e}{-2},1\mathrm{e}{-1}\}$, $k \in \{2,5,10,50\}$, and $\alpha \in \{1\mathrm{e}{-4},1\mathrm{e}{-3},1\mathrm{e}{-2},1\mathrm{e}{-1}\}$. For choosing hyper-parameters, we simulate all models with Reynolds number $12 \mathrm{e}{6}$, speed $0.4$ mach, and angle of attack $5$ degrees. After the hyper-parameters are set, we used Reynolds number $14\mathrm{e}{6}$, speed $0.6$ mach and angle of attack $2$ degrees for evaluation.

\mypara{Implementation details for the baselines.} For ARS, we use grid search over the parameters recommended by \citet{Mania2018}. For CMA-ES, we use pycma~\citep{pycma} with recommended hyperparameters. For Bayesian optimization, we use the REMBO optimizer~\citep{Wang2016} with the following details. For continuous control problems, we use Structured Kernel Interpolation (SKI)~\citep{Wilson2015} and perform grid search over hyperparameters using the ranges recommended by GPyTorch~\citep{gardner2018gpytorch}. For other problems, we use full GP inference since the number of samples is rather low and perform grid search over kernel parameters following GPyTorch~\citep{gardner2018gpytorch}. As an acquisition function, we perform grid search over Optimistic Expected Improvement~\citep{rontsis2017distributionally}, Multi-point Expected Improvement~\citep{MarminCG15}, and the approach of \citet{ginsbourger2010kriging}.

\end{document}